\definecolor{cvprblue}{rgb}{0.21,0.49,0.74}
\newtheorem{rtheorem}{Theorem}
\newtheorem{rcorollary}{Corollary}
\definecolor{codegreen}{rgb}{0,0.6,0}
\definecolor{codegray}{rgb}{0.5,0.5,0.5}
\definecolor{codepurple}{rgb}{0.58,0,0.82}
\definecolor{backcolour}{rgb}{0.95,0.95,0.92}
\tiny\color{codegray},
\title{NN-Former: Rethinking Graph Structure in Neural Architecture Representation}
\author{
Ruihan Xu$^{1}$, Haokui Zhang$^{2}$, Yaowei Wang$^{3}$, Wei Zeng$^{1}$, Shiliang Zhang$^{\dagger 1}$\\
$^{1}$
State Key Laboratory of Multimedia Information Processing, \\School of Computer Science, Peking University, China\\
$^{2}$Northwestern Polytechnical University, Xi'an, China\\
$^{3}$Harbin Institute of Technology (Shenzhen), Shenzhen, China
}
\begin{document}
\maketitle
\begin{abstract}
The growing use of deep learning necessitates efficient network design and deployment, making neural predictors vital for estimating attributes such as accuracy and latency. Recently, Graph Neural Networks (GNNs) and transformers have shown promising performance in representing neural architectures. However, each of both methods has its disadvantages. GNNs lack the capabilities to represent complicated features, while transformers face poor generalization when the depth of architecture grows. 
To mitigate the above issues, we rethink neural architecture topology and show that sibling nodes are pivotal while overlooked in previous research. We thus propose a novel predictor leveraging the strengths of GNNs and transformers to learn the enhanced topology. We introduce a novel token mixer that considers siblings, and a new channel mixer named bidirectional graph isomorphism feed-forward network. Our approach consistently achieves promising performance in both accuracy and latency prediction, providing valuable insights for learning Directed Acyclic Graph (DAG) topology. The code is available at \href{https://github.com/XuRuihan/NNFormer}{https://github.com/XuRuihan/NNFormer}.
\end{abstract}    
\section{Introduction}
\label{sec:intro}

Deep neural networks have demonstrated remarkable success across various applications, highlighting the significance of neural architecture design. Designing neural architectures can be quite resource-intensive. Evaluating the performance of a model necessitates training on large datasets. Measuring its inference latency and throughput involves multiple steps such as compilation, deployment, and latency evaluation on various hardware platforms, incurring substantial human effort and resources. One strategy to mitigate these challenges is to predict network attributes with machine learning predictors. By feeding the network structure and hyperparameters into these predictors, valuable characteristics of the network can be estimated with a single feedforward pass, \emph{e.g.}, accuracy on a validation set or inference times on specific hardware. This predictive approach has been successfully applied in various tasks including neural architecture search~\cite{xu2021renas,luo2018neural,wen2020neural,lu2021tnasp,yi2023nar,yi2024nar} and hardware deployment~\cite{zhang2021nn,kaufman2021learned,dudziak2020brp,liu2022nnlqp,yi2023nar,yi2024nar}, yielding promising outcomes in improving the efficiency and effectiveness of network architecture design.

Previous neural predictors model the neural architecture as a Directed Acyclic Graph (DAG)~\cite{wen2020neural,li2020neural,shi2020bridging,dudziak2020brp,liu2022nnlqp,dong2022pace,luo2024transformers} and utilize Graph Neural Networks (GNNs) or Transformers to extract neural architecture representation. GNNs have emerged as an intuitive solution for learning graph representations~\cite{wen2020neural,li2020neural,shi2020bridging,dudziak2020brp,liu2022nnlqp}, which leverage the graph Laplacian and integrate adjacency information to learn the graph topology. GNN-based predictors show strong generalization ability, yet their performance may not be optimal. This is attributed to the structural bias in the message-passing mechanism, which relies solely on adjacency information. As illustrated in \cref{fig:topology}(a) and (b), GCNs~\cite{kipf2016semi} aggregate the forward and backward adjacent nodes without discrimination, and GATs~\cite{velivckovic2018graph} aggregate them with dynamic weights. Both of them are limited to adjacent information.

With the recent development of transformers, various transformer-based frameworks have been introduced~\cite{lu2021tnasp,yi2023nar,yi2024nar}. Transformers have strengths in global modeling and dynamic weight adjustments, hence could extract strong features. Despite the promising performance, they still exhibit several shortcomings. One particular challenge of transformer is related to the long-range receptive field, as depicted in \cref{fig:topology}(c) and (d), which can lead to poor generalization performance on deep architectures~\cite{yi2023nar,yi2024nar}. The vanilla transformer~\cite{vaswani2017attention,lu2021tnasp,yi2023nar} have a global receptive field, and recent studies proposed transformers on directed transitive closure~\cite{dong2022pace,luo2024transformers}. Both methods conduct long-range attention that could mix up the information from operations far away, especially when the depth of the input architecture increases to hundreds of layers. For example, NAR-Former~\cite{yi2023nar,yi2024nar} has illustrated that transformer predictors with global attention struggle in deep network latency prediction, leading to worse performance than GNNs~\cite{liu2022nnlqp,yi2024nar}.

\begin{figure*}[tb]
    \centering
    \includegraphics[width=0.95\linewidth]{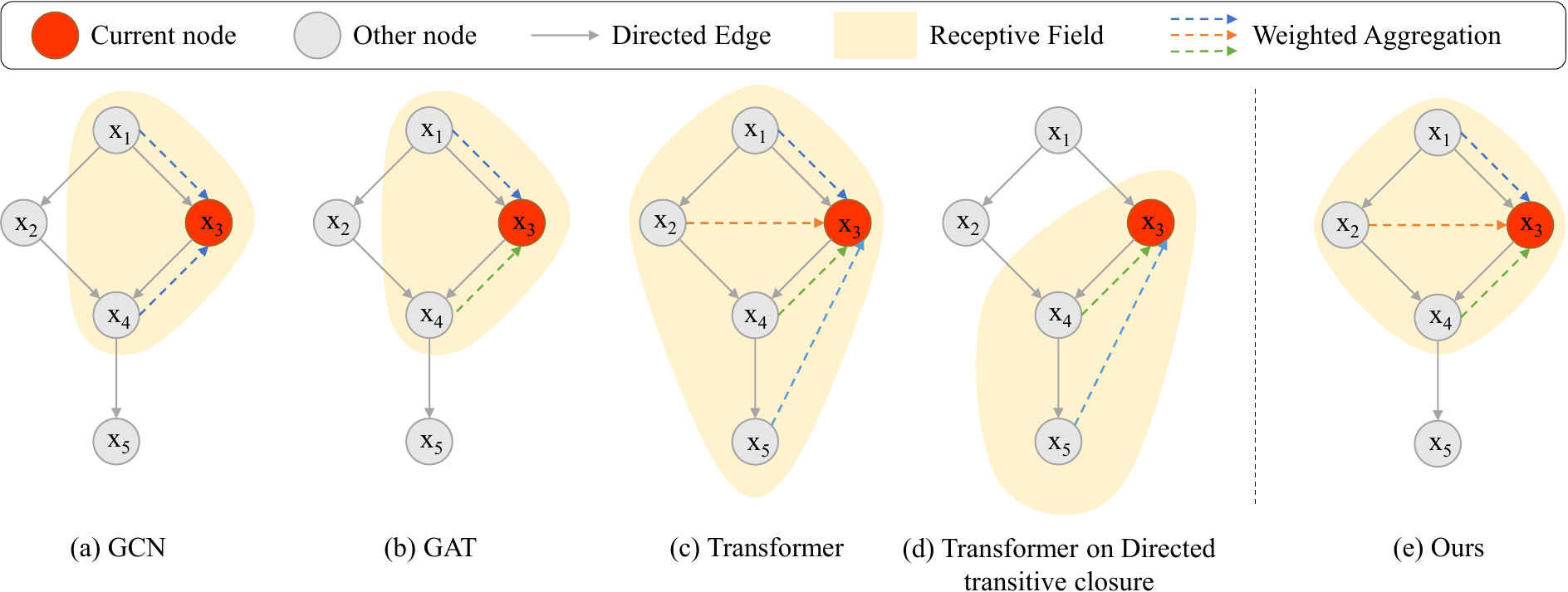}
    \caption{Comparison of different methods on DAG representation of neural architectures. (a) GCNs~\cite{kipf2016semi} aggregate adjacent information without discrimination. (b) GATs~\cite{velivckovic2018graph} distinguish adjacent operations, while are still constrained to adjacent nodes.
    (c) Vanilla transformers~\cite{vaswani2017attention} aggregate weighted global information, which can result in poor generalization as the network depth increases.
    (d) Transformers on directed transitive closure~\cite{dong2022pace,luo2024transformers} aggregate the successor information but still suffer from poor generalization. (e) Our method aggregates sibling information with weighted coefficients. Sibling nodes could extract complementary features in accuracy prediction and allow for concurrent execution in latency prediction.} 
    \label{fig:topology}
\end{figure*}

To study a more effective neural predictor, we rethink the DAG topology and show that the commonly used topological information is not suitable for the neural architecture representation.
Most of recent works focus on modeling the relationship of preceding and succeeding operations~\cite{dong2022pace,luo2024transformers}.
However, it is essential to recognize the importance of ``sibling nodes'', which share a common parent or child node with the current node as shown in \cref{fig:topology}(e). They often exhibit strong connections to the current nodes in neural architecture representation. For example, in the accuracy prediction task, parallel branches may extract complementary features, hence enhancing overall model performance. Furthermore, operations that share the same parent or child node can be executed simultaneously, potentially reducing inference latency. On the contrary, long-range dependency might not be crucial, given that features typically propagate node-by-node within the architecture. Previous methods have not explicitly leveraged sibling cues.

Based on the analysis above, we introduce a new model for neural architecture representation, named Neural Network transFormer (NN-Former). It leverages the strengths of GNNs and transformers, exhibiting good generalization and high performance. For the token mixing module, we utilize a self-attention mechanism of transformers to extract dynamic weights for capturing complex features. We explicitly learn the adjacency and sibling nodes' features to enhance the topological information. For the channel mixing module, we use a bidirectional graph isomorphism feedforward network. It learns strong graph topology information such that the position encoding is no longer necessary.

Experiments reveal that 1) our approach surpasses existing methods in both accuracy prediction and latency prediction, 
and 2) our method has good scalability on both cell-structured architectures and complete neural networks that have hundreds of operations. To the best of our knowledge, this is an original work that leverages sibling cues in neural predictors. Integrating the strengths of GNNs and transformers guarantees its promising performance. The importance of sibling nodes also provides valuable insight into rethinking DAG topology representation in future research.

\section{Related Works}
\label{sec:related}

\noindent\textbf{Neural Architecture Representation Learning.}
Neural architecture representation estimates network attributes without actual training or deployment, resulting in significant resource savings.
Accuracy predictors forecast the evaluation accuracy, avoiding the resource-intensive process of network training in neural architecture search~\cite{liu2018progressive,white2021bananas,deng2017peephole,luo2018neural,luo2020semi,cai2019once,zhang2018graph,li2020neural,shi2020bridging, chen2021contrastive,yan2020does,lu2021tnasp,yi2023nar,yi2024nar}.
Latency prediction estimates the inference latency without actual deployment, saving time and materials for engineering~\cite{dudziak2020brp,zhang2021nn,kaufman2021learned,liu2022nnlqp}.
Graph-based~\cite{zhang2018graph,li2020neural,shi2020bridging, chen2021contrastive,yan2020does} and transformer-based~\cite{lu2021tnasp,yi2023nar,yi2024nar} predictors have been employed to learn the representation of neural architectures.
Both methods achieve promising results in neural architecture representation but still face challenges. In this paper, we absorb the strengths of both methods and delve into the topological relationship.

\noindent\textbf{Message-Passing Graph Neural Networks.}
Most GNNs can be expressed within the message-passing framework~\cite{gilmer2017neural,kipf2016semi,hamilton2017inductive,velivckovic2018graph,xu2018powerful,you2020graph}. In this framework, node representations are computed iteratively by aggregating the embeddings of their neighbors, and a final graph representation can be obtained by aggregating the node embeddings, such as GCN~\cite{kipf2016semi}, GAT~\cite{velivckovic2018graph}, GIN~\cite{xu2018powerful}, etc.
GNN-based models have emerged as a prominent and widely adopted approach for neural network representation learning~\cite{zhang2018graph,li2020neural,shi2020bridging, chen2021contrastive,yan2020does}.
GATES~\cite{ning2020generic} and TA-GATES~\cite{ning2022ta} adopt Gated GNNs, aggregating sibling information to some extent. However, they rely on child nodes to sum up sibling features, with no direct interaction between sibling nodes and requiring stronger aggregation.
The straightforward structure of GNNs contributes to strong generalization ability, yet also necessitates further improvement in the performance.
Enhancing topological information and dynamic bi-directional aggregation is a promising approach.

\noindent\textbf{Transformers on Graphs.}
Transformer has been introduced into graph representation learning~\cite{dwivedi2020generalization,wu2021representing,dong2022pace,luo2024transformers}, together with network architecture representation learning~\cite{lu2021tnasp,yi2023nar,yi2024nar}. TNASP~\cite{lu2021tnasp} inputs the sum of the operation type embeddings and Laplacian matrix into the standard transformer. NAR-Former~\cite{yi2023nar} encodes each operation and connection into a token and inputs all tokens into a multi-stage fusion transformer.
NAR-Former V2~\cite{yi2024nar} introduced a graph-aided transformer block, which handles both cell-structured networks and entire networks.
FlowerFormer~\cite{hwang2024flowerformer} employs a graph transformer and models the information flow within the networks.
ParZC~\cite{dong2024parzc} uses a mixer architecture with a Bayesian network to model uncertainty.
CAP~\cite{ji2024cap} introduces a subgraph matching-based self-supervised learning method.
However, transformers face challenges of poor generalization when the network goes deeper, with global attention mixing up the far away information~\cite{yi2024nar}. To address this limitation, we propose a novel predictor that harnesses the strengths of both GNNs and transformers, allowing it to extract topology features and dynamic weights. This approach enhances the model's capability and maintains good generalization.

\noindent\textbf{Neural Networks over DAGs.}
The inductive bias inherent in DAGs has led to specialized neural predictors.
GNNs designed for DAGs typically compute graph embeddings using a message-passing framework~\cite{thost2021directed}. On the other hand, transformers applied to DAGs often incorporate the depth of nodes~\cite{kotnis2021answering,luo2024transformers} or Laplacian~\cite{gagrani2022neural} as the position encoding, which may seem non-intuitive for integrating structural information into the model. Additionally, transformer-based models frequently use transitive closure~\cite{dong2022pace,luo2024transformers} as attention masks, leading to poor generalization~\cite{yi2024nar}. Some hybrid methods with GNNs and Transformers are not tailored to neural architecture representation and also face similar challenges~\cite{ying2021do,wu2021representing}. This paper proposes a novel hybrid model with enhanced topological information from sibling nodes.

\section{Methods}
\label{sec:methods}

\subsection{Overview}
We adopt a commonly used graph representation of neural architectures~\cite{lu2021tnasp,yi2023nar,yi2024nar,dong2022pace,luo2024transformers,liu2022nnlqp}. An architecture with $n$ operations is refered to as a Graph $G=\left(V,E,\boldsymbol{Z}\right)$, with node set $V$, edge set $E\subseteq V\times V$, and node features $\boldsymbol{Z}\in \mathbb{R}^{n\times d}$. Each operation is denoted as a node in $V$ such that $|V|=n$. The edge set $E$ is often given in form of an adjacency matrix $\boldsymbol{A}\in\left\{0,1\right\}^{n\times n}$, where $\boldsymbol{A}_{ij}=1$ denotes a directed edge from node $i$ to node $j$. Each row of $\boldsymbol{Z}$ represents the feature vector of one node, \emph{i.e.}, operation type and hyperparameters, with the number of nodes $n$ and feature dimension $d$. Unlike previous methods~\cite{lu2021tnasp,yi2023nar}, our predictor is strong and position encoding is unnecessary. For simplicity, $\boldsymbol{Z}$ is encoded with one-hot encoding for operation type and sinusoidal encoding for operation attributes as~\cite{yi2024nar}.
Neural architecture representation~\cite{luo2018neural,wen2020neural,xu2021renas,lu2021tnasp,yi2023nar,yi2024nar,liu2022nnlqp} uses a predictor $\operatorname{f}_{\boldsymbol{\theta}}(\cdot)$ with parameters $\boldsymbol{\theta}$ to estimate specific attributes of candidate architectures, \emph{e.g.}, validation accuracy or inference latency:
\begin{equation}
    \hat{y}=\operatorname{f}_{\boldsymbol{\theta}}\left(\boldsymbol{Z}, \boldsymbol{A}\right), 
\end{equation}
where $\hat{y}$ denotes the predicted attribute of the architecture.

\begin{figure*}[tb]
    \centering
    \includegraphics[width=0.95\linewidth]{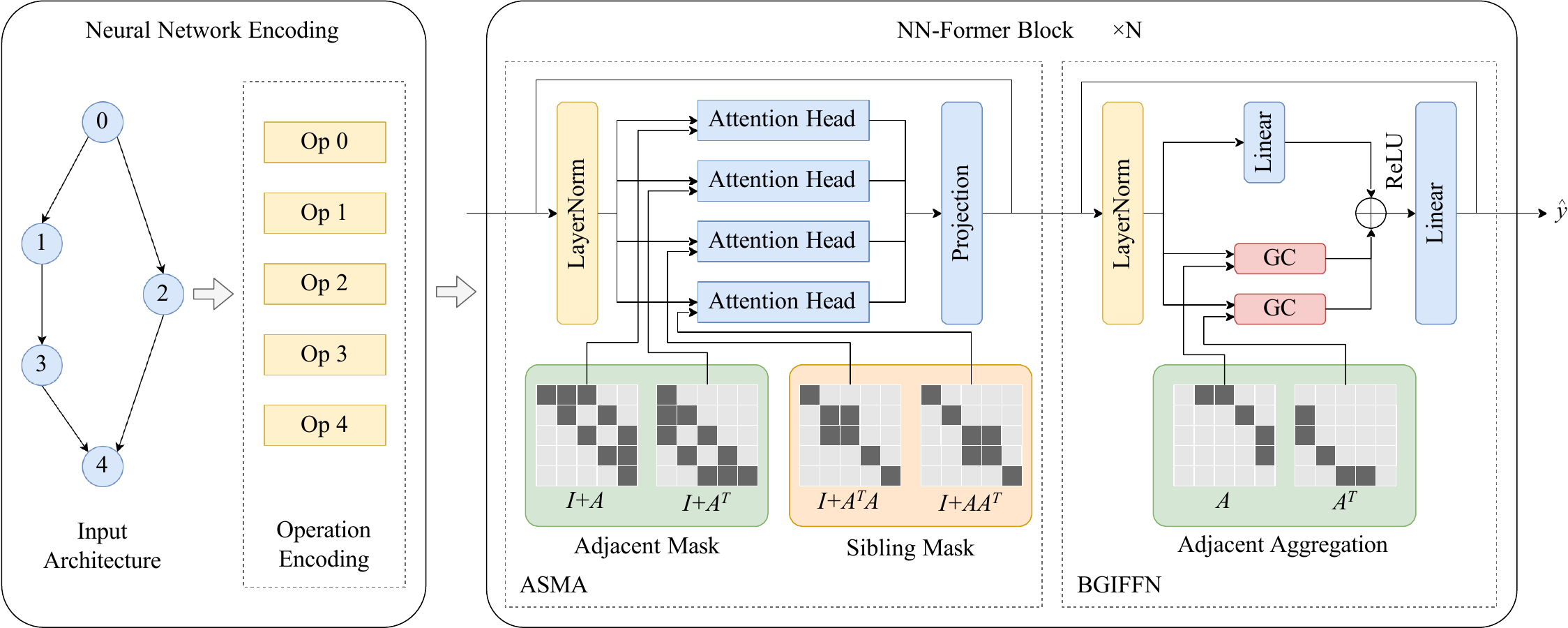}
    \caption{\textbf{The proposed NN-Former framework.} We introduce adjacency and sibling attention masks in the Adjacency-Sibling Multihead Attention (ASMA) to learn graph topology information. We also introduce adjacency aggregation in the Bidirectional Graph Isomorphism Feed-Forward Network (BGIFFN) to enhance the topology structure.}
    \label{fig:framework}
\end{figure*}

As illustrated in Figure~\ref{fig:framework}, our approach uses a transformer as the baseline and incorporates discriminative topological features to pursue a strong predictor. Previous transformer-based predictors considered adjacent propagation~\cite{lu2021tnasp,yi2023nar,yi2024nar} or transitive closure~\cite{dong2022pace,luo2024transformers} as the graph structure information. Global attention is effective in shallow network prediction as shown in previous works such as TNASP~\cite{lu2021tnasp} and NAR-Former~\cite{yi2023nar}. However, {as the network depth increases, there exhibits a decrease in the generalization of global attention as shown in \cite{yi2024nar}}. Global attention may be biased towards training data and demonstrate poor generalization performance. To build a general neural predictor for the range of all depths, we propose a non-global neural predictor that outperforms the previous methods on both accuracy and latency predictions. 

As discussed in Section~\ref{sec:intro}, sibling nodes have a strong relationship with the current nodes and also provide useful information in accuracy and latency prediction. Thus we introduce an Adjacency-Sibling Multi-head Attention (ASMA) in the self-attention layer to learn the local features. As the sibling relationship can be calculated from adjacency matrix $A$, our ASMA is formulated as:
\begin{equation}
    \hat{\boldsymbol{H}}^{l-1} = \operatorname{ASMA}\left(\operatorname{LN}\left(\boldsymbol{H}^{l-1}\right), \boldsymbol{A}\right) + \boldsymbol{H}^{l-1},
\end{equation}
where $\boldsymbol{H}^{l}$ denotes the feature for the layer $l$ and $\operatorname{LN}$ denotes layer normalization. ASMA injects topological information into the transformer, thereby augmenting the capability of Directed Acyclic Graph (DAG) representation learning. In the channel-mixing part, we introduce a Bidirectional Graph Isomorphism Feed-Forward Network (BGIFFN). This module extracts strong topology features and alleviates the necessity of complex position encoding:
\begin{equation}
    \boldsymbol{H}^{l} = \operatorname{BGIFFN}\left(\operatorname{LN}\left(\hat{\boldsymbol{H}}^{l-1}\right), \boldsymbol{A}\right) + \hat{\boldsymbol{H}}^{l-1}.
\end{equation}
As for input and output, the first layer feature $\boldsymbol{H}^{0}$ and the last layer feature $\boldsymbol{H}^{L}$ are related to the input and output in the following way:
\begin{align}
    \boldsymbol{H}^{0}
    &= \operatorname{LN}\left(\operatorname{FC}\left(\boldsymbol{Z}\right)\right),\\
    \hat{y}
    &= \operatorname{FC}\left(\operatorname{ReLU}\left(\operatorname{FC}\left(\boldsymbol{H}^{L}\right)\right)\right),
\end{align}
where $\operatorname{FC}$ denotes fully-connected layer. The following parts proceed to introduce ASMA and BGIFFN in detail.

\subsection{Adjacency-Sibling Multihead Attention}
Given a node, we define its sibling nodes as those that share the same parents or children. To identify these sibling nodes, we use the adjacency matrix $\boldsymbol{A}$ and its transpose $\boldsymbol{A}^\top$. Specifically, nodes sharing the same parent nodes are indicated by the non-zero positions in the matrix product 
$\boldsymbol{A}\boldsymbol{A}^{\top}$, reflecting the backward mapping followed by the forward mapping. Similarly, nodes sharing the same children nodes are identified through the matrix product $\boldsymbol{A}^{\top}\boldsymbol{A}$. In this way, we can identify whether there is a sibling relationship between each pair of nodes.

To inject topological information, we introduce a novel multi-head attention module. As shown in Figure~\ref{fig:framework}, we use four-head attention, where each head uses an attention mask indicating a specific topology. These masks include forward adjacency $\boldsymbol{A}$, backward adjacency $\boldsymbol{A}^{\top}$, siblings with the same parents $\boldsymbol{A}\boldsymbol{A}^{\top}$, and siblings with the same children $\boldsymbol{A}^{\top}\boldsymbol{A}$, respectively. The proposed ASMA is denoted as:
\begin{align}
    \operatorname{ASMA}&\left(\boldsymbol{H}\right)
    = \operatorname{Concat}\left(\boldsymbol{X}_1,\boldsymbol{X}_2,\boldsymbol{X}_3,\boldsymbol{X}_4\right)\boldsymbol{W}^{O},\\
    \boldsymbol{X}_1 =& \operatorname{\sigma}\left({\left(\boldsymbol{Q}_{1}\boldsymbol{K}_{1}^\top \circ \left(\boldsymbol{I} + \boldsymbol{A}\right)\right)}/{\sqrt{h}}\right)\boldsymbol{V}_{1}, \\
    \boldsymbol{X}_2 =& \operatorname{\sigma}\left({\left(\boldsymbol{Q}_{2}\boldsymbol{K}_{2}^\top \circ \left(\boldsymbol{I} + \boldsymbol{A}^{\top}\right)\right)}/{\sqrt{h}}\right)\boldsymbol{V}_{2}, \\
    \boldsymbol{X}_3 =& \operatorname{\sigma}\left({\left(\boldsymbol{Q}_{3}\boldsymbol{K}_{3}^\top \circ \left(\boldsymbol{I} + \boldsymbol{A}\boldsymbol{A}^{\top}\right)\right)}/{\sqrt{h}}\right)\boldsymbol{V}_{3}, \\
    \boldsymbol{X}_4 =& \operatorname{\sigma}\left({\left(\boldsymbol{Q}_{4}\boldsymbol{K}_{4}^\top \circ \left(\boldsymbol{I} + \boldsymbol{A}^{\top}\boldsymbol{A}\right)\right)}/{\sqrt{h}}\right)\boldsymbol{V}_{4},
\end{align}
where $\boldsymbol{X}_i$ denote the $i$-th head feature and $\boldsymbol{Q}_{i}=\boldsymbol{H}\boldsymbol{W}^{Q}_{i}$, $\boldsymbol{K}_{i}=\boldsymbol{H}\boldsymbol{W}^{K}_{i}$, $\boldsymbol{V}_{i}=\boldsymbol{H}\boldsymbol{W}^{V}_{i}$ denotes the query, key, and value for each head, respectively. $\sigma$ is the softmax operation, and $h$ denotes the number of head dimension. $\boldsymbol{I}$ is introduced to contain self-position information. $\circ$ is an elementwise masking operation, which constrains the attention to the non-zero positions of the mask matrix. For example, the first head $\boldsymbol{H}_1$ utilizes an attention mask of $\boldsymbol{I}+\boldsymbol{A}$, which means it only conducts attention on the self-position and forward adjacency position.
ASMA decouples the local topology information into 4 different aspects. This module extracts diverse topological information and enhances feature representation in neural architecture.

\subsection{Bidirectional Graph Isomorphism Feed-Forward Network}
To further enhance the topology information, we propose a bidirectional graph isomorphism feed-forward network. We utilize the adjacency matrix $\boldsymbol{A}$ and its transpose $\boldsymbol{A}^\top$ to aggregate the forward and backward adjacency positions in the feedforward module. The BGIFFN is formulated as:
\begin{align}
    \operatorname{BGIFFN}&\left(\boldsymbol{H}, \boldsymbol{A}\right) = 
    \operatorname{ReLU}\left(\boldsymbol{H}\boldsymbol{W}_{1} + \boldsymbol{H}_{g}\right)\boldsymbol{W}_{2},\label{eq:BGIFFN}\\
    \boldsymbol{H}_g =& \operatorname{Concat}\left(\operatorname{GC}\left(\boldsymbol{H}, \boldsymbol{A}\right), \operatorname{GC}\left(\boldsymbol{H}, \boldsymbol{A}^{\top}\right)\right),
\end{align}
where $\boldsymbol{H}_{g}$ denotes the output features of the graph convolution and $\boldsymbol{W}_1$, $\boldsymbol{W}_2$ denote the parameters of linear transformation. $\operatorname{GC}$ denotes the graph convolution, which is a simplified form of GCN~\cite{kipf2016semi}:
\begin{equation}
    \operatorname{GC}(\boldsymbol{H}, \boldsymbol{A}) = \boldsymbol{A}\boldsymbol{H}\boldsymbol{W},
\end{equation}
where $\boldsymbol{W}$ denotes the parameters of fully-connected layer. Note that we use the directed adjacency matrix rather than graph Laplacian, which makes it simpler and stronger. With $\operatorname{GC}\left(\boldsymbol{H}, \boldsymbol{A}\right)$ and $\operatorname{GC}\left(\boldsymbol{H}, \boldsymbol{A}^{\top}\right)$, we obtain the forward features and backward features. Since we concatenate the two directional features, the BGIFFN will learn forward propagation in one half of the channels, and backward propagation in the other. We will show that BGIFFN demonstrates bidirectional graph isomorphism in Appendix, which enhances topology information.  

\section{Experiments}
\label{sec:experiments}
We conduct experiments on two tasks, namely accuracy prediction and latency prediction. For accuracy prediction, we evaluate the ranking performance of NN-Former on two benchmarks NAS-Bench-101~\cite{nasbench101} and NAS-Bench-201~\cite{dong2020bench}. For latency performance, we conduct experiments on NNLQ~\cite{liu2022nnlqp}. A series of ablation experiments are conducted to demonstrate the effectiveness of our design. More details will be provided in the appendix.

\subsection{Accuracy Prediction}

We predict accuracy on NAS-Bench-101~\cite{nasbench101} and NAS-Bench-201~\cite{dong2020bench}. Both datasets use cell-structured architectures. The NAS-Bench-101~\cite{nasbench101} dataset contains 423,624 unique architectures, each comprising 9 repeated cells with a maximum of 7 nodes and 9 edges per cell. Like the NAS-Bench-101, the architectures in NAS-Bench-201~\cite{dong2020bench} are also built using repeated cells. It presents 15,625 distinct cell candidates, each composed of 4 nodes and 6 edges. We report Kendall's Tau as the previous methods~\cite{lu2021tnasp,ning2020generic,yi2023nar}.

\label{subsubsec:nasbench101}

\begin{table}[tb]
    \centering
    \scriptsize
    \caption{\textbf{Accuracy prediction results on NAS-Bench-101~\cite{nasbench101}.} We use different proportions of data as the training set and report Kendall's Tau on the whole dataset.}
    \renewcommand{\arraystretch}{0.9}
    \setlength{\tabcolsep}{2pt}
    \begin{tabular}{l|l|l|cccc}
    \toprule
    \multirow{3}{*}{Backbone}    & \multirow{3}{*}{Method}  & \multirow{3}{*}{Publication} & \multicolumn{4}{c}{Training Samples} \\
                                 &                          &                              & 0.02\%  & 0.04\% & 0.1\% & 1\% \\
                                 &                          &                              & (100) & (172) & (424) & (4236) \\
    \midrule
    CNN                          & ReNAS~\cite{xu2021renas}             & CVPR 2021 & -      & -      & 0.657 & 0.816  \\
    \midrule
    LSTM                         & NAO~\cite{luo2018neural}             & NeurIPS 2018 & 0.501  & 0.566  & 0.666 & 0.775 \\
    \midrule
    \multirow{3}{*}{GNN}         & NP~\cite{wen2020neural}              & ECCV 2020 & 0.391  & 0.545  & 0.679 & 0.769 \\
                                 & GATES~\cite{ning2020generic}         & ECCV 2020 & 0.605 & 0.659 & 0.691 & 0.822 \\
                                 & GMAE-NAS~\cite{jing2022graph} & IJCAI 2022 & 0.666 & 0.697 & 0.732 & 0.775 \\
    \midrule
    \multirow{4}{*}{Transformer} & Graphormer~\cite{ying2021do}         & NeurIPS 2021 & 0.564  & 0.580  & 0.611 & 0.797  \\
                                 & TNASP~\cite{lu2021tnasp}             & NeurIPS 2021 & 0.600  & 0.669  & 0.705 & 0.820 \\
                                 & NAR-Former~\cite{yi2023nar}          & CVPR 2023 & 0.632  & 0.653  & 0.765 & 0.871 \\
                                 & PINAT~\cite{lu2023pinat}             & AAAI 2024 & 0.679  & 0.715  & 0.772 & 0.846 \\
    \midrule
    \multirow{3}{*}{Hybrid}      & GraphTrans~\cite{wu2021representing} & NeurIPS 2021 & 0.330  & 0.472  & 0.602 & 0.700  \\
                                 & NAR-Former V2~\cite{yi2024nar}       & NeurIPS 2023 & 0.663  & 0.704  & 0.773 & 0.861 \\
                                 & NN-Former (Ours) & - & \textbf{0.709} & \textbf{0.765} & \textbf{0.809} & \textbf{0.877} \\
    \bottomrule
    \end{tabular}
    \label{tab:nasbench101}
\vspace{-1mm}
\end{table}

\textbf{Experiments on NAS-Bench-101.} We implement the configuration in TNASP~\cite{lu2021tnasp} to train our predictor on subsets of 0.02\%, 0.04\%, 0.1\%, and 1\% of the entire dataset. Then we utilized the complete dataset as the test set and computed Kendall's Tau to evaluate the performance.
The results are detailed in \cref{tab:nasbench101}. Our predictor consistently outperforms baseline methods, such as CNNs, LSTMs, GNNs, Transformers, and hybrid GNNs and Transformers. This result underscores the superior predictive capability of NN-Former in determining neural architecture performance.

\label{subsubsec:nasbench201}

\begin{table}[tb]
    \centering
    \scriptsize
    \caption{\textbf{Accuracy prediction results on NAS-Bench-201~\cite{dong2020bench}.} We use different proportions of data as the training set and report Kendall's Tau on the whole dataset.}
    \renewcommand{\arraystretch}{0.9}
    \setlength{\tabcolsep}{2pt}
    \begin{tabular}{l|l|l|cccc}
    \toprule
    \multirow{3}{*}{Backbone}    & \multirow{3}{*}{Method}  & \multirow{3}{*}{Publication} & \multicolumn{4}{c}{Training Samples} \\
                                 &                          &                              & 1\%  & 3\%  & 5\%  & 10\%  \\
                                 &                          &                              & (156)   & (469)   & (781)   & (1563)   \\
    \midrule
    LSTM                         & NAO~\cite{luo2018neural}             & NeurIPS 2018 & 0.493 & 0.470 & 0.522 & 0.526  \\
    \midrule
    GNN                          & NP~\cite{wen2020neural}              & ECCV 2020 & 0.413 & 0.584 & 0.634 & 0.646  \\
    \midrule
    \multirow{4}{*}{Transformer} & Graphormer~\cite{ying2021do}         & NeurIPS 2021 & 0.630 & 0.680 & 0.719 & 0.776  \\
                                 & TNASP~\cite{lu2021tnasp}             & NeurIPS 2021 & 0.589 & 0.640 & 0.689 & 0.724  \\
                                 & NAR-Former~\cite{yi2023nar}          & CVPR 2023 & 0.660 & 0.790 & 0.849 & \textbf{0.901}  \\
                                 & PINAT~\cite{lu2023pinat}             & AAAI 2024 & 0.631 & 0.706 & 0.761 & 0.784  \\
    \midrule
    \multirow{3}{*}{Hybrid}      & GraphTrans~\cite{wu2021representing} & NeurIPS 2021 & 0.409  & 0.550  & 0.588 & 0.673  \\
                                 & NAR-Former V2~\cite{yi2024nar}       & NeurIPS 2023 & 0.752 & 0.846 & 0.874 & 0.888 \\
                                 & NN-Former (Ours) & - & \textbf{0.804} & \textbf{0.860} & \textbf{0.879} & 0.890 \\
    \bottomrule
    \end{tabular}
    \label{tab:nasbench201}
\vspace{-1mm}
\end{table}

\begin{table*}[tb]
    \centering
    \footnotesize
    \caption{\textbf{Out of domain latency prediction on NNLQ~\cite{liu2022nnlqp}.} ``Test Model = AlexNet'' means that only AlexNet models are used for testing, and the other 9 model families are used for training. The best results refer to the lowest MAPE and corresponding ACC (10\%) in 10 independent experiments.}
    \renewcommand{\arraystretch}{0.8}
    \setlength{\tabcolsep}{5pt}
    \begin{tabular}{c|l|cccccccc}
    \toprule
    \multirow{2}{*}{Metric} &\multirow{2}{*}{Test Domain} & \multirow{2}{*}{FLOPs} & \multirow{2}{*}{FLOPs+MAC} & \multirow{2}{*}{nn-Meter} & \multirow{2}{*}{TPU} & \multirow{2}{*}{BRP-NAS} & NNLP & NAR-Former V2 & Ours\\
    &~ & ~ & &  & & & (avg / best) & (avg / best) & (avg / best) \\
    \midrule
    \multirow{11}{*}{MAPE $\downarrow$}&AlexNet & 44.65 & 15.45 & 7.20 & 10.55 & 31.68 & \textbf{10.64 / 9.71} & 24.28  / 18.29 & 11.47 / 11.17 \\
    &EfficientNet & 58.36 & 53.96 & 18.93 & 16.74 & 51.97 & 21.46 / 18.72 & 13.20 / 11.37 & \textbf{5.13 / 4.81} \\
    &GoogleNet    & 30.76 & 32.54 & 11.71 & 8.10  & 25.48 & 13.28 / 10.90 & \textbf{6.61  / 6.15}  & 6.74 / 6.65 \\
    &MnasNet      & 40.31 & 35.96 & 10.69 & 11.61 & 17.26 & 12.07 / 10.86 & 7.16  / 5.93  & \textbf{2.71 / 2.54} \\
    &MobileNetV2  & 37.42 & 35.27 & 6.43  & 12.68 & 20.42 & 8.87  / 7.34  & 6.73  / 5.65  & \textbf{4.17 / 3.66} \\
    &MobileNetV3  & 64.64 & 57.13 & 35.27 & 9.97  & 58.13 & 14.57 / 13.17 & \textbf{9.06  / 8.72}  & 9.07 / 9.03 \\
    &NasBench201  & 80.41 & 33.52 & 9.57  & 58.94 & 13.28 & 9.60  / 8.19  & 9.21  / 7.89  & \textbf{7.93 / 7.71} \\
    &ResNet       & 21.18 & 18.91 & 15.58 & 20.05 & 15.84 & 7.54  / 7.12  & \textbf{6.80  / 6.44}  & 7.49 / 7.38 \\
    &SqueezeNet   & 29.89 & 23.19 & 18.69 & 24.60 & 42.55 & 9.84  / 9.52  & \textbf{7.08  / 6.56}  & 9.08 / 7.05 \\
    &VGG          & 69.34 & 66.63 & 19.47 & 38.73 & 30.95 & \textbf{7.60  / 7.17}  & 15.40 / 14.26 & 20.12 / 19.64 \\
    \cmidrule{2-10}
    &Average      & 47.70 & 37.26 & 15.35 & 21.20 & 30.76 & 11.55 / 10.27   & 10.55   / 9.13 & \textbf{8.39 / 7.96} \\
    \midrule
    \multirow{11}{*}{Acc(10\%) $\uparrow$}&AlexNet      & 6.55 & 40.50 & \textbf{75.45} & 57.10 & 15.20 & 59.07 / 64.40 & 24.65  / 28.60 & 56.08 / 57.10 \\
    &EfficientNet & 0.05  & 0.05  & 23.40 & 17.00 & 0.10  & 25.37 / 28.80  & 44.01 / 50.20 & \textbf{90.85 / 90.90} \\
    &GoogleNet    & 12.75 & 9.80  & 47.40 & 69.00 & 12.55 & 36.30 / 48.75  & 80.10 / 83.35 & \textbf{80.43 / 83.40} \\
    &MnasNet      & 6.20  & 9.80  & 60.95 & 44.65 & 34.30 & 55.89 / 61.25  & 73.46 / 81.60 & \textbf{98.65 / 98.70} \\
    &MobileNetV2  & 6.90  & 8.05  & 80.75 & 33.95 & 29.05 & 63.03 / 72.50  & 78.45 / 83.80 & \textbf{94.90 / 96.85} \\
    &MobileNetV3  & 0.05  & 0.05  & 23.45 & 64.25 & 13.85 & 43.26 / 49.65  & 68.43 / 70.50 & \textbf{74.18 / 74.30} \\
    &NasBench201  & 0.00  & 10.55 & 60.65 & 2.50  & 43.45 & 60.70 / 70.60  & 63.13 / 71.70 & \textbf{69.90 / 71.10} \\
    &ResNet       & 26.50 & 29.80 & 39.45 & 27.30 & 39.80 & 72.88 / 76.40  & \textbf{77.24 / 79.70} & 70.83 / 71.55 \\
    &SqueezeNet   & 16.10 & 21.35 & 36.20 & 25.65 & 11.85 & 58.69 / 60.40  & 75.01 / 79.25 & \textbf{77.85 / 80.95} \\
    &VGG          & 4.80  & 2.10  & 26.50 & 2.60  & 13.20 & \textbf{71.04 / 73.75}  & 45.21 / 45.30 & 29.40 / 29.85 \\
    \cmidrule{2-10}
    &Average      & 7.99  & 13.20 & 47.42 & 34.40 & 21.34 & 54.62 / 60.65  & 62.70 / 67.40 & \textbf{74.31 / 75.47} \\
    \bottomrule
    \end{tabular}
    \label{tab:nnlqp_ood}
\end{table*}

\textbf{Experiments on NAS-Bench-201.} We employ a comparable experimental setup to NAS-Bench-101, \emph{i.e.}, training predictors on different subsets of 1\%, 3\%, 5\%, and 10\%, and then evaluating them on the complete dataset.
The results are depicted in \cref{tab:nasbench201}. NN-Former surpasses other methods in all scenarios except for the 10\% subsets. Note that our method aims at unified prediction for both accuracy and latency, it is acceptable that our method achieves comparable results. We outperform NAR-Former V2~\cite{yi2024nar} for all setups, which has a similar unifying motivation. Additionally, neural architecture search prefers high generalization performance with fewer training samples, resulting in significant resource savings.
More details are discussed in the appendix. 

\subsection{Latency Prediction}

We employ NNLQ as the latency prediction task. NNLQ~\cite{liu2022nnlqp} includes 20,000 deep-learning networks and their respective latencies on specific hardware. This dataset encompasses 10 distinct network types, with 2,000 networks for each type. The depth of each architecture varies from tens to hundreds of operations, requiring the scalability of the neural predictor. In line with NNLP, the Mean Absolute Percentage Error (MAPE) and Error Bound Accuracy (Acc($\delta$)) are employed to assess the disparities between latency predictions and actual values.

\begin{table}[tb]
\centering
\footnotesize
\caption{\textbf{In domain latency prediction on NNLQ~\cite{liu2022nnlqp}.} Training and testing on the same distribution.}
    \renewcommand{\arraystretch}{0.85}
    \setlength{\tabcolsep}{1pt}
    \begin{tabular}{c|l|ccc}
    \toprule
    \multirow{2}{*}{Metric}& \multirow{2}{*}{Test Domain} &      NNLP   & NAR-Former V2 & Ours         \\
    &~                            &      avg / best    & avg / best    & avg / best    \\
    \midrule
    \multirow{11}{*}{MAPE $\downarrow$}&AlexNet         & 6.37 / 6.21 & 6.18 / 5.97  & \textbf{4.69 / 4.61}  \\
    &EfficientNet    & 3.04 / 2.82 & 2.34 / 2.22  & \textbf{2.31 / 2.21}  \\
    &GoogleNet       & 4.18 / 4.12 & 3.63 / 3.46  & \textbf{3.48 / 3.39} \\
    &MnasNet         & 2.60 / 2.46 & 1.80 / 1.70  & \textbf{1.52 / 1.48} \\
    &MobileNetV2     & 2.47 / 2.37 & 1.83 / 1.72  & \textbf{1.54 / 1.50} \\
    &MobileNetV3     & 3.50 / 3.43 & \textbf{3.12 / 2.98}  & 3.17 / 2.99 \\
    &NasBench201     & 1.46 / 1.31 & 1.82 / 1.18  & \textbf{1.11 / 0.96} \\
    &SqueezeNet      & 4.03 / 3.97 & 3.54 / 3.34  & \textbf{3.09 / 3.08} \\
    &VGG             & 3.73 / 3.63 & 3.51 / 3.29  & \textbf{2.94 / 2.89} \\
    &ResNet          & 3.34 / 3.25 & 3.11 / 2.89  & \textbf{2.66 / 2.47} \\
    \cmidrule{2-5}
    &Average         & 3.47 / 3.44 & 3.07 / 3.00  & \textbf{2.85 / 2.65} \\
    \midrule
    \multirow{11}{*}{Acc(10\%) $\uparrow$}&AlexNet         & 81.75 / 84.50 & 81.90 / 84.00 & \textbf{90.50 / 91.00} \\
    &EfficientNet    & 98.00 / 97.00 & 98.50 / 100.0 & \textbf{99.00 / 100.0} \\
    &GoogleNet       & 93.70 / 93.50 & 95.95 / 95.50 & \textbf{97.15 / 97.50} \\
    &MnasNet         & 97.70 / 98.50 & \textbf{99.70 / 100.0} & 99.50 / 100.0 \\
    &MobileNetV2     & 99.30 / 99.50 & \textbf{99.90 / 100.0} & 99.60 / 100.0 \\
    &MobileNetV3     & 95.35 / 96.00 & \textbf{96.75 / 98.00} & 96.50 / 97.00 \\
    &NasBench201     & 100.0 / 100.0 & 100.0 / 100.0 & \textbf{100.0 / 100.0} \\
    &SqueezeNet      & 93.25 / 93.00 & 95.95 / 96.50 & \textbf{97.70 / 98.00} \\
    &VGG             & 95.25 / 96.50 & \textbf{95.85 / 96.00} & 95.80 / 96.50 \\
    &ResNet          & 98.40 / 98.50 & 98.55 / 99.00 & \textbf{99.45 / 99.50} \\
    \cmidrule{2-5}
    &Average         & 95.25 / 95.50   & 96.41 / 96.30 & \textbf{97.45 / 97.85}\\
    \bottomrule
    \end{tabular}
    \label{tab:nnlqp_id}
\end{table}


We conduct the experiments on two scenarios following \cite{yi2024nar}. In the first in-domain scenario, the training and testing sets are from the same distribution. The results are shown in \cref{tab:nnlqp_id}. When testing with all test samples, the average MAPE of our methods is 0.62\% lower than the NNLP~\cite{liu2022nnlqp} and 0.22\% lower than the NAR-Former V2~\cite{yi2024nar}. The average Acc(10\%) is 2.20\% higher than the NNLP and 1.04\% higher than the NAR-Former V2. When tested on various types of network data separately, previous methods fail on specific model types, especially on AlexNet, while our method largely mitigates this challenge and obtains a balanced performance on each model type.

The other out-of-domain scenario is more significant, as it involves inferring an unseen network type during the evaluation. The results in \cref{tab:nnlqp_ood} indicate that FLOPs and memory access data is insufficient for predicting latency. Due to the disparity between kernel delay accumulation and actual latency, kernel-based approaches such as nn-Meter~\cite{zhang2021nn} and TPU~\cite{kaufman2021learned} exhibit inferior performance compared to GNNs (NNLP~\cite{liu2022nnlqp}) and hybrid models (NAR-Former V2~\cite{yi2024nar}). Leveraging enriched topological information, our method achieves the highest MAPE and Acc(10\%) among the average metrics of the ten experimental sets. In comparison to the runner-up NAR-Former V2~\cite{yi2024nar}, our approach demonstrates a substantial 11.61\% increase in average Acc(10\%).

\subsection{Ablation Studies}
In this section, we perform a series of ablation experiments on NAS-Bench-101~\cite{nasbench101} and NNLQ~\cite{liu2022nnlqp} datasets to analyze the effects of the proposed modifications. First, we evaluate ASMA and BGIFFN over a vanilla transformer baseline. Second, we verify the design for both modules, especially by examining the significance of the topological information. All experiments on the accuracy prediction are conducted on the NAS-Bench-101 0.04\% training set.

\begin{table}[tbp]
    \centering
    \footnotesize
    \caption{Ablation study of ASMA on NNLQ~\cite{liu2022nnlqp}. Results on the NAS-Bench-201 family are reported.}
    \renewcommand{\arraystretch}{1.1}
    \setlength{\tabcolsep}{18pt}
    \begin{tabular}{c|cc}
        \toprule
        Attention & MAPE$\downarrow$ & Acc(10\%)$\uparrow$ \\
        \midrule
        Global    & 10.83\%          & 58.45\%             \\
        ASMA      & 7.93\%           & 69.90\%             \\
        \bottomrule
    \end{tabular}
    \label{tab:ablation_component_latency}
\end{table}

\begin{table}[tbp]
    \centering
    \footnotesize
    \caption{Ablation study of ASMA and BGIFFN on NAS-Bench-101~\cite{nasbench101}.}
    \renewcommand{\arraystretch}{0.8}
    \setlength{\tabcolsep}{15pt}
    \begin{tabular}{cc|c}
        \toprule
        Attention & FFN & Kendall's Tau$\uparrow$ \\
        \midrule
        Global    & vanilla & 0.4598        \\
        ASMA      & vanilla & 0.6538        \\
        Global    & BGIFFN  & 0.7656        \\
        ASMA      & BGIFFN  & 0.7654        \\
        \bottomrule
    \end{tabular}
    \label{tab:ablation_component_accuracy}
\end{table}

\textbf{ASMA on latency prediction.} To evaluate the generalization ability of ASMA, we conducted experiments on NNLQ~\cite{liu2022nnlqp} under the out-of-domain setting. We use NAS-Bench-201 as the target model type due to its intricate connections between operations. The results are shown in \cref{tab:ablation_component_latency}, where we keep the number of heads unchanged ablate on the attention mask. The model with ASMA shows a large performance enhancement of 11.45\% Acc(10\%) compared to the one using global attention. It indicates that global attention presents a noticeable disparity, underscoring the advantages of local features when building a unified neural predictor.

\textbf{ASMA and BGIFFN on accuracy prediction}. To evaluate the effectiveness of ASMA and BGIFFN, we conducted accuracy prediction experiments on NAS-Bench-101~\cite{nasbench101}. The findings are detailed in \cref{tab:ablation_component_accuracy}.
The baseline method utilizes a vanilla transformer and the result is respectable since it does not incorporate any topology information. Introducing ASMA, which integrates adjacency and sibling information, leads to an enhancement of 0.6538. Incorporating BGIFFN leads to further enhancement of 0.7656. It indicates that the ASMA and BGIFFN modules effectively capture structural features within the models.
Furthermore, when ASMA and BGIFFN are combined, the model achieves a performance score of 0.7654 which is comparable to the global attention mechanism. However, we have highlighted the limitations of global attention in latency prediction in \cref{tab:ablation_asma_topo}, and our ASMA achieved the best trade-off across both accuracy and latency prediction tasks.

\begin{table}[tbp]
    \centering
    \footnotesize
    \caption{Ablation study on the topological structure of ASMA. We explore different attention masks for the 4 heads.}
    \renewcommand{\arraystretch}{0.7}
    \setlength{\tabcolsep}{10pt}
    \begin{tabular}{c|c|c}
        \toprule
        Row & Attention Mask & Kendall's Tau$\uparrow$ \\
        \midrule
        1   & $\boldsymbol{A}^{\ },\boldsymbol{A}^{\ },\boldsymbol{A}^{\ },\boldsymbol{A}^{\ }$ & 0.7522 \\
        2   & $\boldsymbol{A}^\top,\boldsymbol{A}^\top,\boldsymbol{A}^\top,\boldsymbol{A}^\top$ & 0.7545  \\
        3   & $\boldsymbol{A}^{\ },\boldsymbol{A}^{\ },\boldsymbol{A}^\top,\boldsymbol{A}^\top$ & 0.7566 \\
        4   & $\boldsymbol{A}^{\ },\boldsymbol{A}^\top,\boldsymbol{A}\boldsymbol{A},\boldsymbol{A}^\top\boldsymbol{A}^\top$ & 0.7573 \\
        5   & $\boldsymbol{A}^{\ },\boldsymbol{A}^\top,\boldsymbol{A}^\top\boldsymbol{A},\boldsymbol{A}\boldsymbol{A}^\top$ & \textbf{0.7654}\\
        \bottomrule
    \end{tabular}
    \label{tab:ablation_asma_topo}
\end{table}

\begin{table}[tbp]
    \centering
    \footnotesize
    \caption{Ablation study on the position encoding with ASMA. We explore different ways of position encoding when utilizing ASMA.}
    \renewcommand{\arraystretch}{1.2}
    \setlength{\tabcolsep}{20pt}
    \begin{tabular}{l|c}
        \toprule
        ASMA design                        & Kendall's Tau$\uparrow$ \\
        \midrule
        Ours                               & \textbf{0.7654} \\
        ~+NAR PE~\cite{yi2023nar}          & 0.7449          \\
        ~+Laplacian~\cite{lu2021tnasp}     & 0.7063          \\
        \bottomrule
    \end{tabular}
    \label{tab:ablation_asma_pe}
\end{table}

\textbf{Topological structure of ASMA.} We examine the designs of ASMA in \cref{tab:ablation_asma_topo} and \cref{tab:ablation_asma_pe}. The performance of different attention masks is shown in \cref{tab:ablation_asma_topo}. Maintaining a consistent number of heads at 4, we modify attention mask for each head. Rows 1 and 2 exclusively utilize forward or backward adjacency. Row 3 combines forward and backward adjacency and improves performance. Row 4 investigates the impact of predecessors and successors, indicating marginal enhancement. It shows that empirical topological information in DAG tasks~\cite{dong2022pace,luo2024transformers} is helpless in neural architecture representation. Row 5 combines the adjacency and sibling nodes and achieves the highest performance, highlighting the significance of sibling nodes. It shows that the design of ASMA is robust and logically sound, showcasing its effectiveness in capturing architectural patterns.

\textbf{Position Encoding (PE).} We explored the impact of PE on our model. Traditionally, transformers heavily rely on PE to capture structural information. However, our approach makes this reliance unnecessary because our method inherently incorporates abundant topological information. As shown in \cref{tab:ablation_asma_pe}, we experiment with the inclusion of position encoding in our framework, \emph{i.e.}, NAR PE tailored for neural architecture representation in NAR-Former~\cite{yi2023nar}, and Laplacian position encoding in TNASP~\cite{luo2024transformers}. The results suggest that they have no improvement in the performance of NN-Former. It indicates that our method presents exceptional structural learning capabilities.

\begin{table}[tbp]
    \centering
    \footnotesize
    \caption{Ablation study on the topological structure of BGIFFN. We explore different ways of structure aggregation in BGIFFN.}
    \renewcommand{\arraystretch}{0.7}
    \setlength{\tabcolsep}{10pt}
    \begin{tabular}{c|c|c}
        \toprule
        Row & BGIFFN & Kendall's Tau$\uparrow$ \\
        \midrule
        1   & $\boldsymbol{A}$     & 0.7253 \\
        2   & $\boldsymbol{A}^\top$   & 0.7501 \\
        3   & $\boldsymbol{A}^{\ },\boldsymbol{A}^\top,\boldsymbol{A}^\top\boldsymbol{A},\boldsymbol{A}\boldsymbol{A}^\top$ & 0.7470 \\
        4   & $\boldsymbol{A}^{\ },\boldsymbol{A}^\top$ & \textbf{0.7654} \\
        \bottomrule
    \end{tabular}
    \label{tab:ablation_BGIFFN_topo}
\end{table}

\begin{table}[tbp]
    \centering
    \footnotesize
    \caption{Ablation study on the BGIFFN design. We explore different ways of adjacency aggregation when utilizing BGIFFN.}
    \renewcommand{\arraystretch}{0.9}
    \setlength{\tabcolsep}{20pt}
    \begin{tabular}{l|c}
        \toprule
         BGIFFN design & Kendall's Tau$\uparrow$ \\
        \midrule
        Ours                      & \textbf{0.7654} \\
        ~add$\rightarrow$multiply & 0.7076     \\
        ~$\rightarrow$ GCN (Laplacian) & 0.7296     \\
        ~$\rightarrow$ GAT        & 0.6973     \\
        \bottomrule
    \end{tabular}
    \label{tab:ablation_BGIFFN_design}
\end{table}

\textbf{Topological structure of BGIFFN.} As illustrated in \cref{tab:ablation_BGIFFN_topo}, this experiment maintains the total parameters constant while adjusting the number of splits in the graph convolution branch. In Rows 1 and 2, a single split is retained, and the forward or backward adjacent convolution is conducted.
Moving to Row 3, using 4 splits for adjacency and siblings results in a performance that is even worse than using backward adjacency only. This outcome may be attributed to the considerable strength of topological information provided by ASMA, rendering such a complex graph structure unnecessary.
In Row 4, using 2 splits with forward and backward adjacency produces the most favorable result, underscoring the rationale behind our BGIFFN approach. This finding suggests that BGIFFN is well-founded and effective in leveraging topological information.

\textbf{BGIFFN design.} The gating mechanism~\cite{ning2020generic,xu2023parcnetv2} has demonstrated superior performance to the standard feed-forward layer. However, substituting the elementwise add operation in BGIFFN with the Hadamard product results in a significant performance decrease to 0.7076. This may be attributed to the different features of the two branches, as one represents self-position only, and the other aggregates adjacency features. Directly multiplying the two features yields a decrease in performance. Furthermore, we compare our approach with the conventional GCN~\cite{kipf2016semi} and GAT~\cite{velivckovic2018graph}. Both methods lead to a noticeable performance decline, rendering the superiority of our NN-Former.

\subsection{Inference Speed}
We report the parameters and the latency, memory, and training time on a single RTX 3090 in \cref{tab:inference_speed}. Our method has comparable inference latency, memory usage, and training time compared to NAR-Former~\cite{yi2023nar}, indicating that the improvement brought by our method is solid.

Compared to the tremendous time spent training candidate architectures, the time of training neural predictors is neglectable. Therefore, the computational resources consumed by predictors do not affect practical applications. In the experiments on NNLQ, our method can make predictions for networks with from 20 to 200 layers, which encompasses the size of practical models. It indicates that our method can be applied to practical use.

\begin{table}[tbp]
    \centering
    \footnotesize
    \renewcommand{\arraystretch}{0.9}
    \setlength{\tabcolsep}{2pt}
    \caption{Computation cost of the proposed modules.}
    \begin{tabular}{l|cccc}
    \toprule
    \multirow{2}{*}{Methods}	& Params & Latency & Memory & Training Time \\
    ~ & (M)	& (ms)	& (GB)	& (h)\\
    \midrule
    NAR-Former	& 4.8	& 10.31	& 0.58	& 0.7\\
    NN-Former w/o ASMA	& 4.9	& 11.21	& 0.67	& 0.8\\
    NN-Former w/o BGIFFN	& 3.7	& 10.17	& 0.60	& 0.7\\
    NN-Former	& 4.9	& 11.53	& 0.67	& 0.8\\
    \bottomrule
    \end{tabular}
    \label{tab:inference_speed}
\end{table}

\subsection{Sibling Nodes Modeling on General DAG Tasks}
Our method is dedicated to neural architecture representation learning. However, considering our method is based on DAG modeling, {{is it possible for the sibling nodes modeling to enhance other DAG tasks?}} Our experiments show that the DAG tasks are divided into two groups by the importance of sibling nodes.
One is that there is a strong relationship between siblings, such as citation prediction. Two papers that cite the same paper might follow similar motivations, methods, or experiments. Similarly, two papers cited by the same paper may also have these in common.
The other is that siblings are not as important. For example, Abstract Syntax Tree (AST) uses syntactic construct to aggregate the successors, while siblings do not make practical sense.

\begin{table}[tbp]
    \footnotesize
    \centering
    \caption{Sibling nodes modeling on general DAG tasks.}
    \renewcommand{\arraystretch}{0.9}
    \setlength{\tabcolsep}{8pt}
    \begin{tabular}{l|c|c}
    \toprule
    Model & Cora(\%)$\uparrow$ & ogbg-code2(\%)$\uparrow$  \\
    \midrule
    DAG Transformer & 87.39 & 19.0 \\
    DAG Transformer + sibling	& 88.14	& 18.9\\
    \bottomrule
    \end{tabular}
    \label{tab:sibling_dag}
\end{table}

We conduct experiments on the sibling effects on general DAG tasks. We use the DAG Transformer~\cite{luo2024transformers} as the baseline and add a sibling attention mask without careful calibration. The results in \cref{tab:sibling_dag} show that sibling nodes play a crucial role in Cora Citation prediction, while not as important in ogbg-code2 AST prediction. We hope these findings inspire more research on general DAG tasks.

\section{Conclusion}
This work introduces a novel neural architecture representation model. This model unites the strengths of GCN and transformers, demonstrating strong capability in topology modeling and representation learning. We also conclude that different from the intuition on other DAG tasks, sibling nodes significantly affect the extraction of topological information. Our proposed model performs well on accuracy and latency prediction, showcasing model capability and generalization ability. This work may inspire future efforts in neural architecture representation and neural networks on DAG representation.

\section*{Acknowledgement}
This work is supported in part by Grant No. 2023-JCJQ-LA-001-088, in part by Natural Science Foundation of China under Grant No. U20B2052, 61936011, in part by the Okawa Foundation Research Award, in part by the Ant Group Research Fund, and in part by the Kunpeng\&Ascend Center of Excellence, Peking University.
{
    \small
    \bibliographystyle{ieeenat_fullname}
    \bibliography{main}
}

\clearpage
\setcounter{page}{1}
\setcounter{section}{0}
\maketitlesupplementary

\section{Methods Details}

\subsection{Implementation for ASMA}
We present Python-style code for calculating the attention matrix in the ASMA module in Listing~\ref{lst:asma}. ASMA is motivated by the importance of sibling nodes. In the accuracy prediction, sibling nodes provide complementary features, such as parallel 1x1 and 3x3 convolutions extracting pixel features and local aggregations, respectively. Although the two nodes are neither connected nor reachable through transitive closure, their information can influence each other. This conclusion has been studied in works such as Inception~\citep{szegedy2015going} and RepVGG~\citep{ding2021repvgg}. In latency prediction, sibling nodes can run in parallel. For example, if two parallel 1x1 convolutions are merged into one, it takes only one CUDA kernel and fully utilizes parallel computing. Hence it is reasonable for ASMA to fuse the sibling nodes information directly.
\begin{lstlisting}[caption={Calculating the attention matrix in ASMA.},label=lst:asma]
def attention_matrix(Q, K, A):
    # Q: query, K: key, A: adjacency matrix
    # Calculate the attention scores
    attn = torch.matmul(Q, K.mT) / math.sqrt(Q.size(-1))
    # Prepare attention masks
    pe = torch.stack([A, A.mT, A.mT @ A, A @ A.mT], dim=1)
    pe = pe + torch.eye(L, dtype=A.dtype, device=A.device)
    # Apply masking
    attn = attn.masked_fill(pe == 0, -torch.inf)
    # Softmax operation
    attn = F.softmax(attn, dim=-1)
    return attn
\end{lstlisting}
To implement the masking operation, the values at the non-zero positions remain unchanged, while the other values are set to minus infinity. Consequently, the softmax operation on these masked values results in zeroes.

\subsection{Proof of sibling nodes identification}
In the paper, we use $\boldsymbol{A}^T\boldsymbol{A}$ to represent sibling nodes that share the same successor. Here we provide trivial proof. $\boldsymbol{A}_{ij}=1$ denotes there is a directed edge linked from node $i$ to node $k$. Thus $\boldsymbol{A}_{ki}^{T}=1$ denotes that there is a directed edge linked from node $i$ to node $k$. Thus $\left(\boldsymbol{A}^T\boldsymbol{A}\right)_{kj}=\sum_{v}\boldsymbol{A}_{kv}^{T}\boldsymbol{A}_{vj}\ge\boldsymbol{A}_{ki}^{T}\boldsymbol{A}_{ij}=1$, which denotes that node $k$ and node $j$ share a same successor $i$. Similar to $\boldsymbol{A}\boldsymbol{A}^{T}$, where $\left(\boldsymbol{A}\boldsymbol{A}^T\right)_{kj}\ge1$ if node $k$ and node $j$ share a same predecossor.

\subsection{Proof of Bi-directional Graph Isomorphism Feed-Forward Network}
\label{app:proof_bgiffn}
We begin by summarizing the BIGFFN as the common form of message-passing GNNs, and then prove the isomorphism property. Modern message-passing GNNs follow a neighborhood aggregation strategy, where we iteratively update the representation of a node by aggregating representations of its neighbors. To make comparison with modern GNNs, we follow the same notations, where the feature of node $v$ is denoted as $h_{v}$. The $l$-th layer of a GNN is composed of aggregation and combination operation:
\begin{align}
    a_{v}^{(l)} &= \operatorname{AGGREGATE}\left(h_{u}^{(l)}: u\in \mathcal{N}(v)\right),\\
    h_{v}^{(l)} &= \operatorname{COMBINE}\left(h_{v}^{(l-1)}, a_{v}^{(l)}\right),
\end{align}
where $h_{v}^{(l)}$ is the feature vector of node $v$ at the $l$-th iteration/layer. In our cases, the graph is directional, thus the neighborhood $\mathcal{N}(v)$ is also divided into forward propagation nodes $\mathcal{N}^{+}(v)$ and backward propagation nodes $\mathcal{N}^{-}(v)$:
\begin{equation}
    a_{v}^{(l)} = \operatorname{AGGREGATE}\left(h_{u}^{(l-1)}: u\in \mathcal{N}^{+}(v) \cup \mathcal{N}^{-}(v)\right).
\end{equation}
The $\operatorname{AGGREGATE}$ function in BGIFFN is defined as a matrix multiplication followed by concatenation:
\begin{equation}
    \operatorname{AGGREGATE}:
    \boldsymbol{H} \mapsto \operatorname{Concat}\left(\boldsymbol{A}\boldsymbol{H}\boldsymbol{W}^{+}, \boldsymbol{A}^T\boldsymbol{H}\boldsymbol{W}^{-}\right),
\end{equation}
where $\boldsymbol{W}^{+}$ and $\boldsymbol{W}^{-}$ are the linear transform for forward and backward propagation, respectively. This is equivalent to a bidirectional neighborhood aggregation followed by a concatenation operation:
\begin{equation}
    a_{v}^{(l)} = \operatorname{Concat}\left(
    \sum_{u\in \mathcal{N}^{+}(v)} h_{u}^{(l-1)}\boldsymbol{W}^{+}, \sum_{u\in \mathcal{N}^{-}(v)} h_{u}^{(l-1)}\boldsymbol{W}^{-}
    \right),
\end{equation}
and the COMBINE function is defined as follows:
\begin{equation}
    h_{v}^{(l)} = \operatorname{ReLU}\left(h_{v}^{(l-1)}\boldsymbol{W}_{1} + a_{v}^{(l)}\right)\boldsymbol{W}_{2}.
\end{equation}

We quote Theorem 3 in~\citep{xu2018powerful}. For simple reference, we provide the theorem in the following:
\begin{rtheorem}[Theorem 3 in~\citep{xu2018powerful}]
\label{rthm:gin}
With a sufficient number of GNN layers, a GNN $\mathcal{M} : \mathcal{G}\mapsto \mathbb{R}^{d}$ maps any graphs $G_1$ and $G_2$ that the Weisfeiler-Lehman test of isomorphism decides as non-isomorphic, to different embeddings if the following conditions hold:

a) $\mathcal{T}$ aggregates and updates node features iteratively with
\begin{equation}
    h_{v}^{(l)} = \phi\left(h_{v}^{(l-1)}, f\left(\left\{h_{v}^{(l-1)}: u\in\mathcal{N}(v)\right\}\right)\right),
\end{equation}
where the function $f$,  which operates on multisets, and $\varphi$ are injective.

b) $\mathcal{T}$'s graph-level readout, which operates on the multiset of node features $\left\{h_{v}^{(l)}\right\}$, is injective.
\end{rtheorem}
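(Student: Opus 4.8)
The plan is to show that a GNN satisfying the injectivity conditions (a)--(b) produces, at every iteration, a node coloring that is a \emph{refinement} of the Weisfeiler--Lehman (WL) coloring, so that any pair of graphs WL separates the GNN separates as well. The central object is the parallel between the two iterative procedures: WL updates a node label by $l_v^{(k)} = \operatorname{HASH}\left(l_v^{(k-1)}, \{l_u^{(k-1)}: u\in\mathcal{N}(v)\}\right)$ with an injective hash, while the GNN updates $h_v^{(k)} = \phi\left(h_v^{(k-1)}, f\left(\{h_u^{(k-1)}: u\in\mathcal{N}(v)\}\right)\right)$ with $f$ and $\phi$ injective. I would prove by induction on $k$ that there is an injective map $\psi_k$ with $h_v^{(k)} = \psi_k\left(l_v^{(k)}\right)$ for every node $v$; this single invariant drives the entire argument.

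For the base case $k=0$ the two procedures start from identical node inputs, so $\psi_0$ can be taken as a fixed injection and the invariant holds trivially. For the inductive step I would substitute the hypothesis $h_u^{(k-1)} = \psi_{k-1}\left(l_u^{(k-1)}\right)$ into the GNN update. Because $\psi_{k-1}$ is injective, applying it elementwise to the neighbor multiset loses no information, so the multiset $\{\psi_{k-1}(l_u^{(k-1)}): u\in\mathcal{N}(v)\}$ stands in injective correspondence with $\{l_u^{(k-1)}: u\in\mathcal{N}(v)\}$. Composing this with the injective $f$ and then the injective $\phi$ shows that $h_v^{(k)}$ is an injective function of the pair $\left(l_v^{(k-1)}, \{l_u^{(k-1)}: u\in\mathcal{N}(v)\}\right)$. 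Since the WL label $l_v^{(k)}$ is, by injectivity of $\operatorname{HASH}$, also in bijection with exactly this pair, the composite map $l_v^{(k)}\mapsto h_v^{(k)}$ is well defined and injective, which is the required $\psi_k$ and closes the induction.

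With the node-level invariant in hand, the conclusion follows in two moves. First, the hypothesis that WL certifies $G_1$ and $G_2$ non-isomorphic means that at some finite iteration $K$ the multisets of node labels differ, $\{l_v^{(K)}: v\in G_1\} \ne \{l_v^{(K)}: v\in G_2\}$; ``a sufficient number of layers'' simply means running the GNN for at least $K$ iterations, and since WL color classes stabilize within $n-1$ refinement rounds on an $n$-node graph, $K$ is finite and bounded by the graph size. Second, because $h_v^{(K)} = \psi_K\left(l_v^{(K)}\right)$ with $\psi_K$ injective, the two multisets of node \emph{features} must differ as well, and the injective graph-level readout of condition (b) then sends them to distinct graph embeddings, giving $\mathcal{M}(G_1)\ne\mathcal{M}(G_2)$.

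The main obstacle I anticipate is bookkeeping around multisets rather than any deep idea. The crux is to argue precisely that an injective map on elements induces an injective map on finite multisets, so that passing neighbor labels through $\psi_{k-1}$ is reversible and can be ``undone'' after composition; the hypotheses ``$f$ injective on multisets'' and ``$\phi$ injective'' then compose cleanly with this observation. A secondary point I would make explicit is the finiteness of the iteration count: invoking the standard fact that WL refinement stabilizes in at most $n-1$ rounds shows that ``sufficiently many layers'' is a concrete, graph-size-dependent number rather than a limiting statement, which is what lets the induction terminate at a usable $K$.
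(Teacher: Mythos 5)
Your proof is correct and follows essentially the same route as the argument the paper relies on: the paper quotes this theorem from Xu et al.\ and defers its proof there, and the induction it does spell out (for its directed-graph variant in the appendix) is exactly your invariant --- constructing, iteration by iteration, an injective map $\psi_k$ with $h_v^{(k)}=\psi_k\bigl(l_v^{(k)}\bigr)$ by composing the elementwise-injective lift of multisets with the injective $f$, $\phi$, and the inverse of the WL hash, then finishing with the injective readout. Your added remarks on multiset bookkeeping and the finite stabilization of WL refinement are sound and consistent with that argument, so there is nothing to correct.
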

Please refer to~\citep{xu2018powerful} for the proof. In our cases, the difference lies in condition a), where our tasks use directed acyclic graphs. Thus we modify condition a) as follows:
\begin{rtheorem}[Modified condition for undirected graph]
a) $\mathcal{T}$ aggregates and updates node features iteratively with
\begin{equation}
    h_{v}^{(l)} = \phi\left(h_{v}^{(l-1)}, f\left(\left\{h_{v}^{(l-1)}: u\in\mathcal{N}^{+}(v) \cup \mathcal{N}^{-}(v)\right\}\right)\right),
\end{equation}
where the function $f$, which operate on multisets, and $\varphi$ are injective.
\end{rtheorem}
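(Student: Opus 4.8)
The plan is to show that the BGIFFN's AGGREGATE and COMBINE operators make the two maps $f$ and $\varphi$ appearing in the modified condition a) injective, so that the hypotheses of the quoted Theorem~\ref{rthm:gin} hold in the directed setting and its conclusion carries over verbatim. Since the architectures here are DAGs of bounded size, every neighborhood $\mathcal{N}^+(v)\cup\mathcal{N}^-(v)$ is a finite multiset and the node features occupy a countable set, which is exactly the regime of the multiset lemmas (Lemma~5 and Corollary~6) of~\citep{xu2018powerful}. First I would invoke these to record that summation over a bounded multiset from a countable universe can be made injective for a suitable feature encoding; this is the single external fact the rest of the argument rests on.

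First I would treat the aggregator $f$. In BGIFFN the bidirectional aggregate places the forward and backward contributions on disjoint coordinate blocks, $a_v=\operatorname{Concat}\bigl(\sum_{u\in\mathcal{N}^+(v)} h_u\boldsymbol{W}^+,\ \sum_{u\in\mathcal{N}^-(v)} h_u\boldsymbol{W}^-\bigr)$, and then stacks them. Because the features occupy a countable set and each neighborhood is finite, the multiset lemma guarantees that summation can be made injective on each block for a suitable encoding, an encoding that the learnable $\boldsymbol{W}^{\pm}$ together with the preceding layer's nonlinearity are free to realize. Since concatenation is injective whenever each coordinate block is, $f$ is then injective on the ordered pair $\bigl(\{h_u\}_{u\in\mathcal{N}^+(v)},\{h_u\}_{u\in\mathcal{N}^-(v)}\bigr)$. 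I would stress that keeping the two directions on separate blocks, rather than summing them together, is precisely what refines the coloring to the \emph{directed} Weisfeiler--Lehman test: a node seeing a given predecessor-multiset and successor-multiset is never identified with one that has them interchanged.

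Next I would handle the update $\varphi$, which is the genuine obstacle because it contains a $\operatorname{ReLU}$ and $\operatorname{ReLU}$ is not injective. The map is $h_v^{(l)}=\operatorname{ReLU}\bigl(h_v^{(l-1)}\boldsymbol{W}_1+a_v^{(l)}\bigr)\boldsymbol{W}_2$, and I only need the \emph{existence} of weights making it injective in $(h_v^{(l-1)},a_v^{(l)})$, matching the existential phrasing of Theorem~\ref{rthm:gin}. The standard device is to make the hidden preactivation carry both a positive and a negated copy of each coordinate: choosing $\boldsymbol{W}_1$ and the embedding of $a_v$ so that the argument of $\operatorname{ReLU}$ contains blocks $+x$ and $-x$ for $x=h_v^{(l-1)}\oplus a_v^{(l)}$ lets one recover $x$ via $x=\operatorname{ReLU}(x)-\operatorname{ReLU}(-x)$, so $\operatorname{ReLU}$ loses no information; a final $\boldsymbol{W}_2$ implementing this difference, together with a $(1+\epsilon)$-weighting of the self term as in the GIN injective MLP, yields an injective $\varphi$. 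This is the linear--ReLU--linear analogue of the universal-MLP argument of~\citep{xu2018powerful}, and it is the only place where the nonlinearity has to be argued around.

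Finally I would assemble the pieces: with $f$ and $\varphi$ injective, the modified condition a) holds, and condition b) holds as well whenever the graph-level readout is itself an injective multiset function (e.g.\ sum-type pooling), by the same lemma. Invoking Theorem~\ref{rthm:gin} then gives that a sufficiently deep stack of BGIFFN layers maps any two DAGs separated by the directed Weisfeiler--Lehman test to distinct embeddings, establishing the claimed bidirectional graph-isomorphism property. The main effort, and the only nonroutine step, is the ReLU-injectivity argument for the update; everything else reduces to the countable-multiset machinery already available from~\citep{xu2018powerful}.
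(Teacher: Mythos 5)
Your proposal proves a different claim from the one this statement makes, and in doing so it misses the one-line argument the paper actually uses. The statement is the \emph{conditional} theorem: if a GNN updates via $h_v^{(l)}=\phi\left(h_v^{(l-1)}, f\left(\{h_u^{(l-1)}:u\in\mathcal{N}^{+}(v)\cup\mathcal{N}^{-}(v)\}\right)\right)$ with $f$ and $\varphi$ injective (and the readout injective), then it separates any two graphs that the WL test separates. The paper's entire proof is the observation that $\mathcal{N}^{+}(v)\cup\mathcal{N}^{-}(v)$ is exactly the neighborhood of $v$ in the underlying \emph{undirected} graph, so the quoted Theorem 3 of Xu et al.\ applies verbatim to that undirected graph --- ``the proof is trivial, as it turns back to the original undirected graph.'' You never make this reduction. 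Instead you spend the proof verifying that BGIFFN's concrete AGGREGATE and COMBINE can be realized injectively (the countable-multiset lemma for $f$, the $\operatorname{ReLU}(x)-\operatorname{ReLU}(-x)$ device for $\varphi$). That is the downstream \emph{application}, which the paper handles separately afterwards by invoking Corollary 6 of Xu et al.; it is not a proof of the conditional statement, which holds for arbitrary injective $f$ and $\varphi$ however they are implemented.

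There is also a substantive mismatch in which condition you are instantiating. The aggregator you analyze, $\operatorname{Concat}\left(\sum_{u\in\mathcal{N}^{+}(v)}h_u\boldsymbol{W}^{+},\ \sum_{u\in\mathcal{N}^{-}(v)}h_u\boldsymbol{W}^{-}\right)$, is a function of the \emph{ordered pair} of directional multisets, not of the single union multiset $\{h_u:u\in\mathcal{N}^{+}(v)\cup\mathcal{N}^{-}(v)\}$ appearing in this statement: two nodes with identical union multisets but different forward/backward partitions receive different aggregates, so your $f$ does not even have the required form here. What you describe is the stronger ``directed'' condition, which the paper introduces as a separate, later theorem with its own induction against the directed WL test --- and your remarks about refining the coloring to the directed WL test belong to that second modification, not to this one. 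The individual ingredients you use (the multiset lemma, the positive/negative-copy trick to get around ReLU's non-injectivity) are sound, but they are aimed at the wrong target; for the statement as given, the intended proof is simply the reduction to the undirected case.
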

The proof is trivial, as it turns back to the original undirected graph. Following the Corollary 6 in~\citep{xu2018powerful}, we can build our bidirectional graph isomorphism feed-forward network:
\begin{rcorollary}[Corollary 6 in~\citep{xu2018powerful}]
Assume $\mathcal{X}$ is countable. There exists a function $f:\mathcal{X}\rightarrow\mathbb{R}^n$ so that for infinitely many choices of $\epsilon$, including all irrational numbers, $h(c, X) = (1 + \epsilon)\cdot f(c) + \sum_{x \in X} f(x)$ is unique for each pair $(c, X)$, where $c\in \mathcal{X}$ and $X \subset \mathcal{X}$ is a multiset of bounded size. Moreover, any function $g$ over such pairs can be decomposed as $g\left(c, X \right) = \varphi \left(  \left(1 + \epsilon \right)\cdot f(c) + \sum_{x \in X} f(x)  \right)$ for some function $\varphi$.
\end{rcorollary}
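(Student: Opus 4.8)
The plan is to reduce the statement to the multiset-coding lemma that underlies Lemma~5 of~\citep{xu2018powerful}, and then to handle the extra ``central'' argument $c$ by exploiting the irrationality of $\epsilon$. Since $\mathcal{X}$ is countable, I would fix an enumeration $Z:\mathcal{X}\to\mathbb{N}$, and since the admissible multisets $X$ have bounded size, fix $N\in\mathbb{N}$ so large that $|X|<N$ for every admissible $X$ even after adjoining one extra element. I would then take $f(x)=N^{-Z(x)}$, viewed as a scalar (embedded into one coordinate of $\mathbb{R}^n$). The purpose of this choice is that $\sum_{x\in X}f(x)$ is a base-$N$ expansion whose digits record the multiplicities of the elements of $X$; because every multiplicity is strictly below $N$, distinct bounded multisets yield distinct sums, so $X\mapsto\sum_{x\in X}f(x)$ is injective. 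Note that each $f(x)$ is rational and each such sum is finite, hence rational.

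For the uniqueness claim I would fold the central element into the multiset and rewrite
\begin{equation}
h(c,X)=(1+\epsilon)f(c)+\sum_{x\in X}f(x)=\epsilon\,f(c)+\sum_{x\in\{c\}\cup X}f(x).
\end{equation}
Suppose $h(c_1,X_1)=h(c_2,X_2)$. Rearranging, $\epsilon\,(f(c_1)-f(c_2))$ equals the difference of the two multiset sums, which is rational, and $f(c_1)-f(c_2)$ is itself rational. For irrational $\epsilon$ this can hold only if $f(c_1)=f(c_2)$, which forces $c_1=c_2$ by injectivity of $f$. The equation then collapses to $\sum_{x\in\{c_1\}\cup X_1}f(x)=\sum_{x\in\{c_2\}\cup X_2}f(x)$, and the multiset-coding property gives $\{c_1\}\cup X_1=\{c_2\}\cup X_2$, hence $X_1=X_2$. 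Thus $h$ is injective on admissible pairs for every irrational $\epsilon$. Since the only values of $\epsilon$ that can ever produce a collision solve a linear equation with rational data and are therefore rational, every irrational $\epsilon$ is safe; as the irrationals are uncountable, this already yields infinitely many admissible $\epsilon$, establishing the ``infinitely many, including all irrationals'' claim.

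For the decomposition claim I would invert $h$ on its image: because $h$ is injective, define $\varphi$ on $\operatorname{range}(h)$ by $\varphi(z)=g(h^{-1}(z))$ and arbitrarily off the range. Then $g(c,X)=\varphi(h(c,X))$ by construction, which is exactly the required factorization through the summation form.

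The main obstacle, and the only step requiring genuine care, is the separation-of-scalars argument: I must ensure that the single $\epsilon$-weighted term $\epsilon\,f(c)$ disentangles the identity of $c$ from the surrounding multiset while leaving the combined sum injective. The clean resolution is the rational-versus-irrational dichotomy above, which decouples the $\epsilon$-weighted contribution from the purely rational multiset code. The bounded-size hypothesis is essential throughout, as it is precisely what validates the base-$N$ coding and keeps all the sums finite and hence rational.
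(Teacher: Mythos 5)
Your proof is correct, and it is essentially the argument behind the result as it appears in \citep{xu2018powerful}: the paper quotes this corollary verbatim and defers its proof to that reference, where the construction is exactly your $f(x)=N^{-Z(x)}$ base-$N$ multiset code combined with the rational-versus-irrational dichotomy for $\epsilon$. Your only departure is cosmetic — rewriting $(1+\epsilon)f(c)+\sum_{x\in X}f(x)$ as $\epsilon f(c)+\sum_{x\in \{c\}\cup X}f(x)$ merges the two cases ($c'=c$ versus $c'\neq c$) that the original proof handles separately, and you correctly enlarge $N$ to keep the augmented multiset within the size bound.
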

In our cases, $\epsilon$ is substituted by a linear transform with weights $\boldsymbol{W}_1$. $f$ is the aggregation function, and $\varphi$ is the combine function. There exist choices of $f$ and $\varphi$ that are injective, thus the conditions are satisfied.

Furthermore, our BGIFFN distinguishes the forward and backward propagation, yielding stronger capability in modeling graph topology. Our method corresponds to a stronger ``directed WL test'', which applies a predetermined injective function $z$ to update the WL node labels $k_v^{(l)}$:
\begin{equation}
    k_v^{(l)} = z\left(k_v^{(l)}, \left\{k_v^{(l)}: u\in\mathcal{N}^{+}(v) \right\}, \left\{k_v^{(l)}: u\in\mathcal{N}^{-}(v) \right\}\right),
\end{equation}
and the condition is modified as:
\begin{rtheorem}[Modified condition for directed graph]
a) $\mathcal{T}$ aggregates and updates node features iteratively with
\begin{equation}
\begin{aligned}
    h_{v}^{(l)} = \phi\left(h_{v}^{(l-1)}, f\left(\left\{h_{v}^{(l-1)}: u\in\mathcal{N}^{+}(v)\right\}\right)\right.,\\
    \left.g\left(\left\{h_{v}^{(l-1)}: u\in\mathcal{N}^{-}(v)\right\}\right)\right),
\end{aligned}
\end{equation}
where the function $f$ and $g$, which operate on multisets, and $\varphi$ are injective.
\end{rtheorem}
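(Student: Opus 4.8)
The statement to establish is that a network $\mathcal{T}$ obeying the modified directed condition~a) together with the injective graph-level readout~b) maps any two DAGs that the directed WL test declares non-isomorphic to distinct embeddings. The plan is to adapt the proof of Theorem~\ref{rthm:gin} from~\citep{xu2018powerful} almost verbatim, the only change being that the single neighbourhood multiset is replaced by the ordered pair consisting of the forward multiset and the backward multiset, on which $z$, $f$, $g$ and $\phi$ now act. The heart of the argument is an induction on the layer index $l$ showing that the features produced by $\mathcal{T}$ refine the directed WL labels, i.e. that there exists a function $\xi^{(l)}$ with $k_v^{(l)}=\xi^{(l)}\!\bigl(h_v^{(l)}\bigr)$ for every node $v$; this ``labels are recoverable from features'' direction is exactly what yields the distinguishing conclusion.

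First I would dispatch the base case: at $l=0$ both $h_v^{(0)}$ and $k_v^{(0)}$ equal the input attribute of $v$, so $\xi^{(0)}$ is merely a relabelling. For the inductive step I would assume $k_v^{(l-1)}=\xi^{(l-1)}\!\bigl(h_v^{(l-1)}\bigr)$ for every node. Because $f$, $g$ and $\phi$ are injective, the update equation makes $h_v^{(l)}$ an injective encoding of the triple $\bigl(h_v^{(l-1)},\,\{h_u^{(l-1)}:u\in\mathcal{N}^{+}(v)\},\,\{h_u^{(l-1)}:u\in\mathcal{N}^{-}(v)\}\bigr)$, so this triple is recoverable from $h_v^{(l)}$. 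Applying $\xi^{(l-1)}$ to each entry converts it into the directed WL data $\bigl(k_v^{(l-1)},\,\{k_u^{(l-1)}:u\in\mathcal{N}^{+}(v)\},\,\{k_u^{(l-1)}:u\in\mathcal{N}^{-}(v)\}\bigr)$, to which the injective WL update $z$ assigns $k_v^{(l)}$. Composing these deterministic maps defines $\xi^{(l)}$ and closes the induction.

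To conclude, I would combine the induction with condition~b). If the directed WL test separates $G_1$ and $G_2$ at iteration $l$, the label multisets $\{k_v^{(l)}\}$ of the two graphs differ; since each label is a function of the corresponding feature, the feature multisets $\{h_v^{(l)}\}$ must also differ, and the injective readout then sends them to distinct embeddings, as required. That BGIFFN genuinely realises injective $f$, $g$ and $\phi$ on the countable feature domain, so that condition~a) is not vacuous, follows from the construction used for Corollary~6 of~\citep{xu2018powerful}, applied once to the forward aggregate and once to the backward aggregate with images kept in separable coordinates.

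The step I expect to be the crux is the claim, inside the inductive step, that $h_v^{(l)}$ determines the forward and backward neighbour multisets \emph{separately} rather than only through their union or sum. This is precisely where the directed test gains power over the undirected one, and it holds because $f$ and $g$ act on the two multisets independently before being merged by the injective $\phi$, so the two aggregates can never be conflated. Verifying that the composition of two injective multiset aggregators with $\phi$ remains injective on pairs of multisets — equivalently, that a directed refinement loses no information relative to the undirected refinement — is the one place requiring care, and it is what formally justifies the stronger ``directed WL test'' underlying BGIFFN.
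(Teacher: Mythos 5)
Your proof is correct and follows essentially the same route as the paper's: an induction over layers establishing a layer-wise correspondence between the GNN features $h_v^{(l)}$ and the directed WL labels $k_v^{(l)}$, driven by the injectivity of $f$, $g$, and $\phi$ (which is exactly what lets the forward and backward multisets be recovered separately), and concluded via the injective graph-level readout. The only cosmetic difference is that you write the correspondence as $k_v^{(l)}=\xi^{(l)}\bigl(h_v^{(l)}\bigr)$ (labels recoverable from features), whereas the paper constructs the map in the opposite direction, $h_v^{(l)}=\varphi\bigl(k_v^{(l)}\bigr)$ with $\varphi=\psi\circ z^{-1}$ injective; the two are interchangeable here because the induction makes the correspondence bijective at every layer.
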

\begin{proof}
The proof is a trivial extension to Theorem~\ref{rthm:gin}. Let $\mathcal{T}$ be a GNN where the condition holds. Let $G_1$ and $G_2$ be any graphs that the directed WL-test (which means propagating on the directed graph) decides as non-isomorphic at iteration $L$. Because the graph-level readout function is injective, it suffices to show that $\mathcal{T}$'s neighborhood aggregation process embeds $G_1$ and $G_2$  into different multisets of node features with sufficient iterations. We will show that for any iteration $l$, there always exists an injective function $\varphi$ such that $h_{v}^{(k)} = \varphi\left(k_v^{(l)}\right)$. This holds for $l$ = 0 because the initial node features are the same for WL and GNN $k_v^{(0)} = h_v^{(0)}$. So $\varphi$ could be the identity function for $k = 0$. Suppose this holds for iteration $k - 1$, we show that it also holds for $l$. Substituting $h_v^{(l-1)}$ with $\varphi\left(h_v^{(l-1)}\right)$ gives us:
\begin{equation}
\begin{aligned}
    h_{v}^{(l)} = \phi\left(
    \varphi\left(h_v^{(l-1)}\right),
    f\left(\left\{\varphi\left(h_v^{(l-1)}\right): u\in\mathcal{N}^{+}(v)\right\}\right)\right.,\\
    \left.g\left(\left\{\varphi\left(h_v^{(l-1)}\right): u\in\mathcal{N}^{-}(v)\right\}\right)
    \right),
\end{aligned}
\end{equation}
Since the composition of injective functions is injective, there exists some injective function $\psi$ so that
\begin{equation}
\begin{aligned}
    h_{v}^{(l)} = \psi\left(
    h_v^{(l-1)},
    \left\{h_v^{(l-1)}: u\in\mathcal{N}^{+}(v)\right\},\right.\\
    \left.\left\{h_v^{(l-1)}: u\in\mathcal{N}^{-}(v)\right\}
    \right).
\end{aligned}
\end{equation}
Then we have
\begin{equation}
\begin{aligned}
    h_{v}^{(l)} = \psi \circ z^{-1}z\left(k_v^{(l)}, \left\{k_v^{(l)}: u\in\mathcal{N}^{+}(v) \right\}, \right.\\
    \left.\left\{k_v^{(l)}: u\in\mathcal{N}^{-}(v) \right\}\right),
\end{aligned}
\end{equation}
and thus $\varphi = \psi \circ z^{-1}$ is injective because the composition of injective functions is injective. Hence for any iteration $l$, there always exists an injective function $\varphi$ such that $h_{v}^{(l)}=\varphi\left(h_v^{(l-1)}\right)$. At the $L$-th iteration, the WL test decides that $G_1$ and $G_2$ are non-isomorphic, that is the multisets $k_{v}^{L}$ are different for $G_1$ and $G_2$. The graph neural network $\mathcal{T}$'s node embeddings $\left\{h_{v}^{(L)}\right\}=\left\{\varphi\left(k_{v}^{(L)}\right)\right\}$ must also be different for $G_1$ and $G_2$ because of the injectivity of $\varphi$.
\end{proof}

\subsection{Implementation for BGIFFN}
We present Python-style code for the BGIFFN module in Listing~\ref{lst:bgiffn}. BGIFFN is intended to extend Graph Isomorpsim to the bidirectional modeling of DAGs. It extracts the topological features simply and effectively, assisting the Transformer backbone in learning the DAG structure. Various works use convolution to enhance FFN in vision~\citep{guo2022cmt} and language tasks~\citep{wu2019pay}. It is reasonable for BGIFFN to assist Transformer in neural predictors.
\begin{lstlisting}[caption={Calculation for BGIFFN.},label=lst:bgiffn]
def bgiffn(x, A, W_1, W_forward, W_backward, W_2):
    # x: node features, A: adjacency matrix
    # W_1, W_forward, W_backward, W_2: the weight for linear transform
    aggregate = torch.cat((A @ x @ W_forward, A.mT @ x @ W_backward), dim=-1)
    combine = F.relu(x @ W_1 + aggregate) @ W_2
    return combine
\end{lstlisting}

\section{Experiment Details}
\label{sec:experiment_details}
We present implementation details of our proposed NN-Former. For accuracy prediction, we show the experiment settings on NAS-Bench-101 in Section~\ref{subsubsec:setup_nasbench101} and NAS-Bench-201 in Section~\ref{subsubsec:nasbench101}. For latency prediction, we show the experiment settings on NNLQ~\citep{liu2022nnlqp} in Section~\ref{subsubsec:nnlq}. We implement our method on Nvidia GPU and Ascend NPU.

\subsection{Accuracy Prediction}
For the network input, each operation type is represented by a 32-dimensional vector using one-hot encoding. Subsequently, this encoding is converted into a 160-channel feature by a linear transform and a layer normalization. The model contains 12 transformer blocks commonly employed in vision transformers~\citep{dosovitskiy2020image}. Each block comprises ASMA and BGIFFN modules. The BGIFFN has an expansion ratio of 4, mirroring that of a vision transformer. The output class token is transformed into the final prediction value through a linear layer. Initialization of the model follows a truncated normal distribution with a standard deviation of 0.02. During training, Mean Squared Error (MSE) loss is utilized, alongside other augmentation losses as outlined in NAR-Former~\citep{yi2023nar} with $\lambda_{1}=0.2$ and $\lambda{2}=1.0$. The model is trained for 3000 epochs in total. A warm-up~\citep{goyal2017accurate} learning rate from 1e-6 to 1e-4 is applied for the initial 300 epochs, and cosine annealing~\citep{loshchilov2016sgdr} is adopted for the remaining duration. AdamW~\citep{loshchilov2017decoupled} with a coefficient (0.9, 0.999) is utilized as the optimizer. The weight decay is set to 0.01 for all the layers except that the layer normalizations and biases use no weight decay. The dropout rate is set to 0.1. We use the Exponential Moving Average (EMA)~\citep{polyak1992acceleration} with a decay rate of 0.99 to alleviate overfitting. Each experiment takes about 1 hour to train on an RTX 3090 GPU.

\subsubsection{Experiments on NAS-Bench-101.}
\label{subsubsec:setup_nasbench101}
NAS-Bench-101~\citep{nasbench101} provides the performance of each architecture on CIFAR-10~\citep{krizhevsky2009learning}. It is an operation-on-node (OON) search space, which means nodes represent operations, while edges illustrate the connections between these nodes. Following the approach of TNASP~\citep{lu2021tnasp}, we utilize the validation accuracy from a single run as the target during training, and the mean test accuracy over three runs is used as ground truth to assess the Kendall's Tau~\citep{sen1968estimates}. The metrics on the test set are computed using the final epoch model, the top-performing model, and the best Exponential Moving Average (EMA) model on the validation set. The highest-performing model is documented.

\subsubsection{Experiments on NAS-Bench-201.}
\label{subsubsec:setup_nasbench201}

NAS-Bench-201 offers three sets of results for each architecture, corresponding to CIFAR-10, CIFAR-100, and ImageNet-16-120. This study focuses on the CIFAR-10 dataset, consistent with the setup in TNASP~\citep{lu2021tnasp}.

NAS-Bench-201~\citep{dong2020bench} is originally operation-on-edge (OOE) search space, while we transformed the dataset into the OON format.
NAS-Bench-201 contains the performance of each architecture on three datasets: CIFAR-10~\citep{krizhevsky2009learning}, CIFAR-100~\citep{krizhevsky2009learning}, and ImageNet-16-120 (a downsampled subset of ImageNet~\citep{deng2009imagenet}). We use the results on CIFAR-10 in our experiments following previous TNASP~\citep{lu2021tnasp}, NAR-Former~\citep{yi2023nar} and PINAT~\citep{lu2023pinat}. In the preprocessing, we drop the useless operations taht only have zeroized input or output. The metrics on the test set are computed using the final epoch model, the top-performing model, and the best Exponential Moving Average (EMA) model on the validation set. The highest-performing model is documented.

As for the results in the 10\% setting, we argue that these results are not a good measurement. Concretely, the predictors are trained on the validation accuracy of NAS-Bench-201 networks, and evaluated on the test accuracy. We calculate Kendall's Tau between ground truth validation accuracy and test accuracy on this dataset which is 0.889. It indicates an unneglectable gap between the predictors' training and testing. Thus the results around and higher than 0.889 are less valuable to reflect the performance of predictors. For further studies, we also provide a new setting for this dataset. Both training and evaluation are conducted on the test accuracy of NAS-Bench-201 networks, and the training samples are dropped during evaluation. This setting has no gap between the training and testing distribution. As shown in Table~\ref{tab:nasbench201_nogap}, our methods surpass both NAR-Former~\citep{yi2023nar} and NAR-Former V2~\citep{yi2024nar}, showcasing the strong capability of our NN-Former.

\begin{table}[tb]
    \centering
    \footnotesize
    \caption{\textbf{Accuracy prediction results on NAS-Bench-201~\citep{dong2020bench} when the training and testing data follow the same distribution.} We use different proportions of data as the training set and report Kendall's Tau on the whole dataset.}
        \renewcommand{\arraystretch}{0.85}
    \setlength{\tabcolsep}{8pt}
    \begin{tabular}{l|l|cccc}
    \toprule
    \multirow{2}{*}{Method}  & \multirow{2}{*}{Publication} & \multicolumn{1}{c}{Training Samples} \\
                             &                              & 10\% (1563)   \\
    \midrule
    NAR-Former~\citep{yi2023nar}    & CVPR 2023    & 0.910  \\
    \midrule
    NAR-Former V2~\citep{yi2024nar} & NeurIPS 2023 &  0.921 \\
    NN-Former (Ours)               & -            &  \textbf{0.935} \\
    \bottomrule
    \end{tabular}
    \label{tab:nasbench201_nogap}
\end{table}

\subsection{Latency prediction}

\subsubsection{Experiments on NNLQ.}
\label{subsubsec:nnlq}
There are two scenarios on latecny prediction on NNLQ~\citep{liu2022nnlqp}. In the first scenario, the training set is composed of the first 1800 samples from each of the ten network types, and the remaining 200 samples for each type are used as the testing set. The second scenario comprises ten sets of experiments, where each set uses one type of network as the test set and the remaining nine types serve as the training set. 
The network input is encoded in a similar way as NAR-Former V2~\citep{yi2024nar}. Each operation is represented by a 192-dimensional vector, with 32 dimensions of one-hot operation type encoding, 80 dimensions of sinusoidal operation attributes encoding, and 80 dimensions of sinusoidal feature shape encoding.
Subsequently, this encoding is converted into a 512-channel feature by a linear transform and a layer normalization. The model contains 2 transformer blocks, the same as NAR-Former V2~\citep{yi2024nar}. Each block comprises ASMA and BGIFFN modules. The BGIFFN has an expansion ratio of 4, mirroring that of a common transformer~\citep{dosovitskiy2020image}. The output features are summed up and transformed into the final prediction value through a 2-layer feed-forward network. Initialization of the model follows a truncated normal distribution with a standard deviation of 0.02. During training, Mean Squared Error (MSE) loss is utilized. The model is trained for 50 epochs in total. A warm-up~\citep{goyal2017accurate} learning rate from 1e-6 to 1e-4 is applied for the initial 5 epochs, and a linear decay scheduler is adopted for the remaining duration. AdamW~\citep{loshchilov2017decoupled} with a coefficient (0.9, 0.999) is utilized as the optimizer. The weight decay is set to 0.01 for all the layers except that the layer normalizations and biases use no weight decay. The dropout rate is set to 0.05. We also use static features as NAR-Former V2~\citep{yi2024nar}. Each experiment takes about 4 hours to train on an RTX 3090 GPU.

\section{Extensive experiments}

\subsection{Ablation on hyperparameters}
This work adopts a Transformer as the backbone, and the hyperparameters of Transformers have been well-settled in previous research. This article follows the common training settings (from NAR-Former) and has achieved good results. Apart from these hyperparameters, we provide an ablation on the number of channels and layers in the predictor as shown in Table~\ref{tab:ablation_hyperparameter}:

\begin{table}[ht]
\footnotesize
\caption{\textbf{Ablation studies on hyperparameters.} All experiments are conducted on the NAS-Bench-101~\citep{nasbench101} with the 0.04\% training set. (a) Ablation study on the number of channels. (b) Ablation study on the number of transformer layers.} 
\label{tab:ablation_hyperparameter}
\begin{subtable}{0.49\linewidth}
    \centering
    \caption{}
    \renewcommand{\arraystretch}{0.7}
    \setlength{\tabcolsep}{8pt}
    \begin{tabular}{l|c}
    \toprule
    Num of Channels	& KT\\
    \midrule
    64	& 0.748\\
    128	& 0.758\\
    160 (Ours)	& 0.765\\
    \bottomrule
    \end{tabular}
    \label{tab:ablation_num_channel}
\end{subtable}
\begin{subtable}{0.49\linewidth}
    \centering
    \caption{}
    \renewcommand{\arraystretch}{0.7}
    \setlength{\tabcolsep}{8pt}
    \begin{tabular}{c|c}
    \toprule
    Num of Layers	& KT\\
    \midrule
    6	& 0.744\\
    9	& 0.760\\
    12 (Ours)	& 0.765\\
    \bottomrule
    \end{tabular}
    \label{tab:ablation_num_layer}
\end{subtable}
\end{table}

\subsection{Comparison with Zero-Cost predictors}
Zero-cost proxies are lightweight NAS methods, but they performs not as well as the model-based neural predictors.

\begin{table}[ht]
    \footnotesize
    \centering
    \caption{Comparison with zero-cost predictors.}
    \begin{tabular}{l|cc}
    \toprule
    NAS search space	& NAS-Bench-101$\uparrow$	& NAS-Bench-201$\uparrow$\\ 
    \midrule
    grad\_norm~\citep{abdelfattah2021zero}	& 0.20	& 0.58\\
    snip~\citep{abdelfattah2021zero}			& 0.16	& 0.58\\
    NN-Former	& 0.71	& 0.80\\
    \bottomrule
    \end{tabular}
    \label{tab:my_label}
\end{table}

\section{Model Complexity}
\subsection{Theoretical Analysis}
Our ASMA method has less or equal computational complexity than the vanilla attention. On the dense graph, the vanilla self-attention has a complexity of $O(N^2)$ where N denotes the number of nodes. With the sibling connection preprocessed, our ASMA also has a complexity of $O(N^2)$. On sparse graphs, the vanilla self-attention is still a global operation thus the complexity is also $O(N^2)$. Our ASMA has a complexity of $O(NK)$, where $K$ is the average degree and $K << N$ on sparse graphs. In practical applications, sparse graphs are common thus our method is efficient. The latency prediction experiments in the paper show that our predictor can cover the DAGs from 20~200 nodes, which is applicable for practical use.

\end{document}